\newcommand{\mtx}[1]{\bm{#1}}
\newtheorem{lemma}{Lemma}
\newtheorem{proposition}{Proposition}
\newtheorem{theorem}{Theorem}[section]
\newtheorem{remark}{Remark}
\newtheorem{corollary}[theorem]{Corollary}
\newcommand{\U}{{\mathbf{U}}}
\newcommand{\V}{{\mathbf{V}}}
\newcommand{\X}{{\mathbf{X}}}
\newcommand{\D}{{\mathbf{D}}}
\newcommand{\M}{{\mathbf{M}}}
\newcommand{\Z}{{\mathbf{Z}}}
\newcommand{\bW}{{\mathbf{W}}}
\newcommand{\Zopt}{{\mathbf{Z}^*}}
\newcommand{\Wopt}{{\mathbf{W}^*}}
\newcommand{\Id}{{\mathbf{Id}}}
\newcommand{\bSigma}{{\mathbf{\Sigma}}}
\def \D {\mtx{D}}
\newcommand{\G}{{\mathbf{G}}}
\newcommand{\B}{{\mathbf{B}}}
\newcommand{\Q}{{\mathbf{Q}}}
\newcommand{\La}{{\mathbf{\Lambda}}}
\newcommand{\z}{{\mathbf{z}}}
\newcommand{\p}{{\mathbf{p}}}
\newcommand{\bP}{{\mathbf{P}}}
\newcommand{\E}{{\mathbf{E}}}
\newcommand{\col}[1]{#1_{(\cdot,j)}}
\newcommand{\row}[1]{#1_{(i,\cdot)}}
\newcommand{\rowj}[1]{#1_{(j,\cdot)}}
\newcommand{\entry}[1]{{#1}_{(i,j)}}
\DeclareMathOperator*{\argmin}{arg\,min}
\newcommand{\subscript}[2]{$#1 _ #2$}
\begin{document}

\title{Analysis of Fast Structured Dictionary Learning\footnote{This article has been accepted for publication in Information and Inference Published by Oxford University Press.}}

\author[1]{Saiprasad~Ravishankar}
\author[2]{Anna Ma}
\author[3]{Deanna Needell}

\affil[1]{Department of Electrical Engineering and Computer Science, University of Michigan, Ann Arbor}
\affil[2]{Institute of Mathematical Science, Claremont Graduate University}
\affil[3]{Department of Mathematics, University of California Los Angeles}

\maketitle

\begin{abstract}
{Sparsity-based models and techniques have been exploited in many signal processing and imaging applications.
Data-driven methods based on dictionary and sparsifying transform learning enable learning rich image features from data, and can outperform analytical models. 
In particular, alternating optimization algorithms have been popular for learning such models.
In this work, we focus on alternating minimization for a specific structured unitary sparsifying operator learning problem, and provide a convergence analysis. While the algorithm converges to the critical points of the problem generally, our analysis establishes under mild assumptions, the local linear convergence of the algorithm to the underlying sparsifying
model of the data. Analysis and numerical simulations show that our assumptions hold for standard probabilistic data models. In practice, the algorithm is robust to initialization.}
{Sparse representations, Dictionary learning, Transform learning, Alternating minimization, Convergence guarantees, Generative models, Fast algorithms.}
\\
2000 Math Subject Classification: 68T05.
\end{abstract}

\section{Introduction} \label{sec1}
Various models of signals and images have been exploited in signal processing and imaging applications, such as dictionary and sparsifying transform models, tensor models, and manifold models. Wavelets and other analytical sparsifying transforms have been used in compression standards \cite{jpg2}, denoising, and magnetic resonance image (MRI) reconstruction from compressive measurements \cite{lustig}. 
While these approaches used \emph{fixed} or analytical image models that are independent of the input data, there has been a rising interest in data-dependent or data-driven models.
Learned models may outperform analytical models 
in various applications. For example, learned dictionaries and sparsifying transforms work well in applications such as denoising \cite{elad2}, in-painting \cite{Mai,studer2012dictionary}, and medical image reconstruction \cite{xu:12:ldx}.This work focuses on analyzing the convergence behavior of a structured (unitary) sparsifying transform  learning algorithm and investigates its ability to recover underlying data models.
In the following, we present some background on dictionary and sparsifying operator learning, before discussing the specific learning problem and algorithm, and our contributions.

\subsection{Background}

Signals can be modeled as sparse in different ways such as in a synthesis dictionary or in a transform domain. In particular, the synthesis dictionary model represents a given signal $ \mathbf{p}  \in \mathbb{R}^{n}$ as $\p \approx \D\z \in \mathbb{R}^n$ with $\mathbf{D}  \in \mathbb{R}^{n \times J} $ denoting the synthesizing dictionary and $\mathbf{z} \in \mathbb{R}^{J} $ denoting the sparse code, i.e., $\left \| \mathbf{z} \right \|_{0}\ll n$ with the $\ell_{0}$ ``norm" counting the number of non-zero vector entries. The synthesis dictionary model is often refereed to as a union of (low-dimensional) subspaces model for signals, wherein different signals may be approximately spanned by different subsets of dictionary columns or atoms. Finding the optimal sparse representation for a signal in the synthesis dictionary model involves solving the well-known synthesis \emph{sparse coding} problem.\footnote{For example, one may minimize $\left \| \mathbf{p}-\mathbf{D}\mathbf{z} \right \|_{2}^{2}$ with respect to $\mathbf{z}$ subject to  $\left \| \mathbf{z} \right \|_{0}\leq s$, where $s$ denotes a set sparsity level, or an alternative version of this problem.} This problem is known to be NP-hard in general \cite{npb} and numerous algorithms exist for approximating the solution to the sparse coding problem \cite{pati, chen2,  befro, Needell2, wei} that provide the correct solution under certain conditions. On the other hand, the sparsifying transform model assumes that $\mathbf{W}\mathbf{p} \approx \mathbf{z}$, where $\mathbf{W}  \in \mathbb{R}^{K \times n}$ denotes a sparsifying transform and $\mathbf{z}$ is assumed to emit a sparse structure (where zeros correspond to the transform rows that approximately annihilate the signal). The sparsifying transform model is a generalization \cite{sabres} of the analysis model \cite{namel} that assumes that applying $\mathbf{W}$ to a signal produces several zeros in the output. These models can be viewed as a union of null-spaces model for signals.\footnote{Depending on the signal set, either a compact (i.e., without too many atoms) dictionary or sparsifying transform may be best suited for them.} For the transform model, sparse transform-domain approximations are obtained exactly by simple (e.g., hard or soft) thresholding \cite{sabres}. 

The learning of dictionaries and sparsifying transforms from a collection of signals has been explored in many recent works \cite{ols, elad, Yagh, smith1, aksvd, sabres}. The learning problems are often highly non-convex (e.g., nonconvexity due to product of matrices structure, or nonconvex constraints such as using the $\ell_{0}$ ``norm" ), and many learning algorithms lack proven convergence guarantees or model recovery guarantees. Recent works \cite{spel2b, agra1, arora1, yint3, bao1, bao2, agra2, sbclsTS2} have studied the convergence of specific learning algorithms. Some of these works demonstrate promising results in applications for efficient synthesis dictionary \cite{bao1, bao2, yint3} or transform \cite{sbclsTS2, bao22} learning algorithms and prove convergence of the learning methods to the critical points (or generalized stationary points \cite{vari1}) of the underlying costs. These works all employ the $\ell_0$ ``norm" or other nonconvex regularizers in their costs, which work well in applications. Other works such as \cite{agra1, agra2} use the $\ell_1$ norm and prove the recovery of the underlying generative model for specific learning methods using an alternating minimization approach, but rely on restrictive assumptions on sparsity and the initial error. Arora et al.~\cite{arora2015simple} analyzed alternating minimization approaches to synthesis dictionary learning and provided a convergence radius $O(1/\log n)$ (i.e., initializations within the radius provide convergence), but the upper bound on the iterate error included a non-zero offset and fresh samples may be needed in each iteration. In~\cite{arora1}, the authors propose and analyze polynomial time algorithms for learning  overcomplete dictionaries but comment that their algorithms are not suitable for large-scale applications due to computational runtime costs. Moreover, these and other schemes \cite{spel2b} have not been demonstrated to be practically powerful in applications such as inverse problems and can be computationally expensive.

Often, additional properties may be enforced on the model during learning such as incoherence \cite{barchi1, irami}, nonsingularity \cite{spel2b}, etc. In a recent two-part work, Sun et al. \cite{sun1, sun2} focused on complete dictionaries and studied the geometric properties of the non-convex objective for dictionary learning over a high dimensional sphere. Their work showed with high probability that there were no spurious local minimizers and proposed an algorithm that converged to local minimizers. While other works such as \cite{arora2015simple, schnass2018convergence, chatterji2017alternating,bai2018subgradient} provided theoretical guarantees for specific dictionary learning algorithms, they do not enforce structural constraints on the dictionary during learning. This work enforces the learned model to be \emph{unitary}, which has been demonstrated to be both effective and computationally advantageous in practice \cite{sravTCI1, bao2013fast, sabres3, hanif2014maximum}. While alternating minimization algorithms for general synthesis dictionary learning typically require iterative or greedy or other approximate techniques to solve the subproblems \cite{elad, agra1}, the corresponding algorithms with unitary models, even with the $\ell_0$ ``norm", typically have efficient closed-form solutions~\cite{sravTCI1}. Although unitary dictionary learning has shown promise empirically, there has been a lack of theoretical guarantees for proposed methods \cite{bao2013fast, hanif2014maximum, sabres3}. Given the recently increasing interest in such models and their effectiveness in applications such as inverse problems \cite{sravTCI1, sravwohl}, our work focuses on analyzing the convergence of algorithms for such structured nonconvex learning problems.

In the following section, we outline the \emph{structured} (unitary) operator learning approach that involves simple, computationally cheap updates. We investigate its convergence properties in the rest of the paper.

\subsection{Unitary Operator Learning Formulation and Algorithm}
Given an $n \times N$ training data set $\bP$, whose columns represent training signals, our goal is to find an $n \times n$ sparsifying transformation matrix $\bW$ and an $n \times N$ sparse coefficients (representation) matrix $\Z$ by solving the following constrained optimization problem:
\begin{equation}
\argmin_{\bW, \Z} \|\bW\bP - \Z \|^2_F  \;\;\; \text{s.t.} \;\; \bW^T\bW = \Id, \, \begin{Vmatrix}
\Z_{(.,j)}
\end{Vmatrix}_{0}\leq s \,\,\, \forall j.
\label{eq:opt_program}
\end{equation}
We focus on the learning of unitary sparsifying operators ($\bW^T\bW = \Id$ with $\Id$ denoting the identity matrix) that have shown promise in applications such as denoising \cite{sabres3} and medical image reconstruction \cite{sravTCI1} compared to other models. The columns $\Z_{(.,j)}$ of $\Z$ have at most $s$ non-zeros (measured using the $\ell_{0}$ ``norm"), where $s$ is a given parameter. Alternatives to Problem~\eqref{eq:opt_program} involve replacing the column-wise sparsity constraints with a constraint on the total sparsity (aggregate sparsity) of the entire matrix $\Z$, or using a sparsity penalty (e.g., $\ell_{p}$ penalties with $0 \leq p \leq  1$).

Problem~\eqref{eq:opt_program} is an instance of sparsifying transform learning \cite{pfisbres, sabres}, with unitary constraint on the operator or filter set. Sparsifying transform learning generalizes conventional analysis dictionary learning. Analysis dictionary learning approaches typically minimize the $\ell_0$ or $\ell_1$ norm of $\mathbf{WP}$ subject to non-triviality constraints on $\mathbf{W}$ that prevent trivial solutions such as the all-zero matrix \cite{nam}. Popular variations to model noisy data minimize $\left \| \mathbf{P}-\mathbf{\hat{P}} \right \|_{F}^2$ subject to sparsity-type constraints on $\mathbf{W\hat{P}}$ and constraints on $\mathbf{W}$ \cite{yana,rumi,aksvd}. Sparse coding in the latter variation (i.e., estimating $\mathbf{\hat{P}}$ for fixed $\mathbf{W}$) can be NP-hard in general. Problem (1.1) learns a different generalization of the analysis model, where $\mathbf{P}$ is assumed ``approximately" sparse in the \emph{transformed domain}. Natural signals and images are well-known to be approximately sparse in the wavelet or discrete cosine transform (DCT) domain, etc., and such sparsifying transforms have also been exploited for denoising data. Problem~\eqref{eq:opt_program} with the unitary constraint on $\bW$ is also equivalent to learning a synthesis dictionary $\bW^{T}$ for sparsely approximating the training data $\bP$ as $\bW^{T}\Z$.

Alternating minimization algorithms are commonly used for learning synthesis dictionaries \cite{eng, elad, zibul, smith1}, analysis dictionaries (such as the noisy data variation above) \cite{yana,aksvd}, and sparsifying transforms \cite{pfisbres, sabres, saiwen}. In particular, unlike sparse coding in the first two models, which can be NP-hard in general, computing sparse approximations in the transform model is cheap involving thresholding, and thus, various efficient and effective algorithms have been proposed for transform learning with different properties or constraints on $\mathbf{W}$. One could alternate between solving for $\bW$ and $\Z$ in Problem~\eqref{eq:opt_program} \cite{sabres3, sbclsTS2}. In this case, the solution for the $t$th $\Z$ update (\emph{sparse coding step}) is obtained as $\Z_{(.,j)}^{t} = H_{s}(\bW^{t-1} \bP_{(.,j)})$ $\forall j$, where $\bP_{(.,j)}$ and $\Z_{(.,j)}^{t}$ denote the $j$th columns of $\bP$ and $\Z^{t}$ respectively, and the operator $H_{s}(\cdot)$ zeros out all but the $s$ largest magnitude elements of a vector, leaving other entries unchanged (i.e., thresholding to $s$ largest magnitude elements). The solution for the subsequent $\bW$ update (\emph{operator update step}) is obtained by first computing the full singular value decomposition (SVD) of $\Z^t \bP^{T}$ as $\V \mathbf{\Sigma} \U^{T}$, and then $\bW^{t}=\V \U^{T}$. The algorithm repeats these relatively cheap updates until convergence. The overall method is provided in Algorithm~\ref{alg:alternate}.

Although Problem~\eqref{eq:opt_program} is nonconvex because of the $\ell_{0}$ sparsity and unitary operator constraints, the alternating minimization algorithm involves cheap and closed-form update steps. The thresholding-type solution for the sparse coding step readily generalizes to alternative formulations such as with an $\ell_0$ aggregate sparsity constraint or sparsity penalties \cite{sbclsTS2}. These advantages of unitary operator learning (that also extend to general sparsifying transform learning \cite{saiwen}) and its effectiveness in applications \cite{sravTCI1} render it quite attractive vis-a-vis alternatives such as overcomplete synthesis dictionary learning, and hence we investigate it further in this work.

Problem~\eqref{eq:opt_program} is also interpreted as training an efficient convolutional or filterbank model \cite{pfisbres, sravwohl} for 2D (or higher dimensional) images, with thresholding-type nonlinearities. To see this, we observe that if overlapping patches of an image or collection of images of size $\sqrt{n} \times \sqrt{n}$ are (vectorized and) used for training with periodic image boundary condition (so patches at image boundaries wrap around on the opposite side of the image) and a patch stride of $1$ pixel in the horizontal and vertical directions (maximal patch overlap), then the transform learned by Problem~\eqref{eq:opt_program} is applied to sparse code the data by first applying each row to all the image patches via inner products, followed by thresholding operations. The sparse outputs of the transform are thus generated by circularly convolving its reshaped (into 2D patches and flipped) rows with the image followed by thresholding. Thus, Problem~\eqref{eq:opt_program} adapts a collection of orthogonal sparsifying filters for images, and Algorithm~\ref{alg:alternate} can also be implemented with filtering-based operations.

\subsection{Contributions}
In this work, we focus on investigating the convergence properties of the aforementioned efficient alternating minimization algorithm for unitary sparsifying operator learning. Recent works have shown convergence of the algorithm (or its variants) to critical points of the equivalent unconstrained problem \cite{sbclsTS2, bao22, sravTCI1}, where the constraints are replaced with barrier penalties (that take value $+ \infty$ when the constraint is violated and $0$ otherwise). Here, we further prove the fast local linear convergence of the algorithm to the underlying data models. Our results hold under mild assumptions that depend on the properties of the underlying sparse coefficients matrix $\Z$. In addition to showing convergence, we also characterize the convergence radius and rate and discuss general and example distributions of the data for which our results hold. We also show experimentally that our assumptions and convergence guarantees hold for well-known probabilistic models of $\Z$. Our experiments and initial arguments indicate that the learning algorithm is robust to initialization.

\subsection{Organization}
The rest of this paper is organized as follows. Section~\ref{sec2a} presents the main convergence results and proofs. Section~\ref{sec3} presents experimental results supporting the statements in Section~\ref{sec2a} and illustrating the empirical behavior of the transform learning algorithm. In Section~\ref{sec4}, we conclude with proposals for future work.

\begin{algorithm}
 \caption{Alternating Optimization for \eqref{eq:opt_program}} \label{alg:alternate}
\begin{algorithmic}
		\State \textbf{Input: } Training data matrix $\bP$, maximum iteration count $L$, sparsity $s$
		\State \textbf{Output: }$\bW^L$, $\Z^L$ 
		\State \textbf{Initialize: } $\bW^0$ and $t=1$
	   \For{$t \leq L$}
		   \State $\Z_{(.,j)}^t = H_{s}\begin{pmatrix}\bW^{t-1} \bP_{(.,j)} \end{pmatrix}$ $\forall$ $j$ \Comment{$\Z^t = \argmin_{\Z: \|
\Z_{(.,j)} \|_{0}\leq s \,\, \forall j} \|\bW^{t-1}\bP - \Z\|^2_F $} 
		  \State $\bP {\Z^t}^{T}$=$\U^t \bSigma^t {\V^t}^T$  \Comment{$\bW^t = \argmin_{\bW: \bW^T\bW = \Id} \|\bW \bP - \Z^{t} \|^2_F $}  
		  \State $\bW^t = \V^t{\U^t}^T$
		  \State $t = t+1$
	   \EndFor

\end{algorithmic}
\end{algorithm}

\section{Convergence Analysis} \label{sec2a}
The main contribution of this work is the convergence analysis of Algorithm~\ref{alg:alternate}. We begin this section outlining notation and the assumptions under which our analysis operates. Following this, we summarize the theoretical guarantees of our work and present the proofs for these results.

\subsection{Notation} 
We adopt the following notation in the rest of the paper. Matrix $\Z$ denotes the $n \times N$ sparse coefficients matrix, $\bW$ is the $n \times n$ sparsifying transform, and $\bP$ denotes the $n \times N$ data set. The $t$th approximation of a variable (iterate in the algorithm) is denoted $(\cdot)^t$. The capital letter $T$ is reserved for the transpose operator, i.e., the variable ${\Z^t}^T$ should be read as the transpose the of $t$th approximation for $\Z$. With the exception of $T$, capitalized letters are used for matrices and lowercase letters are used for vectors, with further subscripts denoting the row, column, or entry of the matrix or vector. The $i$th row, $j$th column, and the $(i,j)$th entry of a matrix $\M$ are denoted $\row{\M}$, $\col{\M}$, and $\entry{\M}$, respectively. For any vector $\mathbf{v}$, $S(\mathbf{v})$ denotes the function that returns the support, i.e., $S(\mathbf{v}) := \{i: \mathbf{v}_i \neq 0 \}$, where $\mathbf{v}_i$ denotes the $i$th entry (scalar) of $\textbf{v}$. The operator $H_{s}(\mathbf{v})$ leaves the $s$ largest magnitude elements of $\mathbf{v}$ unchanged and zeros out all other entries (i.e., thresholding to $s$ largest magnitude elements). Matrix $\D_k$ denotes an $n \times n$ diagonal matrix of ones and a zero at location $(k,k)$. Additionally, $\tilde{\D}_k$ denotes an $N \times N$ diagonal matrix that has ones at entries $(i,i)$ for $i \in S(\Zopt_{(k,\cdot)})$ and zeros elsewhere, and matrix $\Zopt$ is defined in Section \ref{sec:assump} (see assumption $(A_1)$). The Frobenius norm, denoted $\| \X \|_F$, is the square root of the sum of squared elements of $\X$ and $\| \X \|_2$ denotes the spectral norm. Lastly, $\Id$ denotes the appropriately sized identity matrix.

\subsection{Assumptions} \label{sec:assump}
 We begin with the following assumptions that will be used in various results:
\begin{enumerate}[label=(\subscript{A}{\arabic*})]
\itemsep0em 
\item \textbf{Generative model:} There exists a $\Z^*$ and unitary $\bW^*$ such that  $\bW^* \bP = \Z^*$ and $\left \| \bP \right \|_{2}=1$ (normalized data). 
\item \textbf{Sparsity:} The columns of $\Z^*$ are $s$-sparse, i.e., $\| \col{\Z}^* \|_0 = s$ $\forall$ $j$. 
\item \textbf{Spectral property:} The underlying $\Zopt$ satisfies the bound $\kappa^{4}\begin{pmatrix} 
\Zopt
\end{pmatrix}\max_{1\leq k \leq n} \| \D_k \Zopt \tilde{\D}_k \|_2 < 1$, where $\kappa(\cdot)$ denotes the condition number (ratio of largest to smallest singular value).
\item \textbf{Orthogonal coefficients:} The rows of $\Z^*$ are orthonormal, i.e., $\Zopt\Zopt^T = \Id$.
\item \textbf{Initialization:} $\| \bW^0 - \bW^* \|_{F} \leq \epsilon$ for an appropriate small $\epsilon >0$. \end{enumerate}

The first two assumptions are on the model for the data, i.e., we would like the algorithm to find an underlying (unitary) sparsifying transform and representation matrix such that $\bW^* \bP =\Z^*$ holds (data generated as $\bW^{*^T}\Z^*$), where the columns of $\Z^*$ have $s$ nonzeros. The coefficients are assumed ``structured" in assumption $(A_3)$, satisfying a spectral property, which will be used to establish our theoretical results. When $s=2$ and $\Zopt\Zopt^T = \Id$, we show that assumption $(A_3)$ simplifies to very intuitive and deterministic conditions of uniqueness (the support of no two rows of $\Zopt$ fully coincide) and irreducibility (each row of $\Zopt$ has at least one nonzero, i.e., each atom or row of $\bW^*$ contributes to at least one nonzero in the data representation). More generally or when $s>2$, the condition that each row of $\Zopt$ has at least one nonzero is still required in order for $(A_3)$ to hold (as otherwise $\kappa(\Zopt) = \infty$) but the assumption does not reduce to a simple setting. We will present an analysis and empirical results showing that the spectral property holds for well-known probabilistic models. The analysis will also show that in general the underlying matrices $\Zopt$ and $\D_k \Zopt \tilde{\D}_k$ defining the spectral property behave similarly for the probabilistic models as $N \to \infty$ as for the special $s=2$ case above. Assumption $(A_4)$ on orthogonality of coefficient matrix (normalized) rows simplifies the condition in assumption $(A_3)$ (since $\kappa(\Zopt) =1$) and is used in presenting/proving one version of the results, but is omitted in the generalization. For well-known probabilistic models of the coefficient matrix, we will show that the orthogonality holds asymptotically. Assumption $(A_5)$ on algorithm initialization states that the initial sparsifying transform, $\bW^0$ is sufficiently close to the solution $\bW^*$. Such an assumption has also been made in other works, where the issue of good initialization is tackled separately~\cite{agra1, agra2}. Section~\ref{sec2main} characterizes $\epsilon$ in assumption $(A_5)$ in more detail. While the main results in Section~\ref{sec2maina} use assumption $(A_5)$, we also discuss a generalization in Section~\ref{sec2maind}. Our theoretical results are stated next.

\subsection{Results}\label{sec2main}
In the following, Theorem~\ref{thm:main1} first presents a convergence result using all the aforementioned assumptions. Then Theorem~\ref{thm:main2} generalizes the result by dropping Assumption $(A_4)$. Proposition~\ref{conj} states that Assumption $(A_3)$ holds under a general probabilistic model on the sparse representation matrix $\Zopt$. We also later show numerical results illustrating Proposition~\ref{conj}. We also provide a corollary on a special case of Theorems~\ref{thm:main1} and \ref{thm:main2} and some remarks. In particular, Remark~\ref{rm11} discusses dropping the data normalization assumption in $(A_1)$, and Remark~\ref{rm12} discusses the effect of noise on Theorems~\ref{thm:main1} and \ref{thm:main2}. Proposition~\ref{convradius} and Remark~\ref{rm1b} characterize and discuss the behavior of $\epsilon$ in Assumption $(A_5)$.

\subsubsection{Main Results} \label{sec2maina}
\begin{theorem} Under Assumptions $(A_1)$-$(A_5)$, the Frobenius error between the iterates generated by Algorithm~\ref{alg:alternate} and the underlying model in Assumptions $(A_1)$ and $(A_2)$ is bounded as follows:
\begin{equation}
\| \Z^t - \Zopt \|_F  \leq  q^{t-1} \epsilon, \;\; \| \bW^t - \Wopt \|_F  \leq  q^t \epsilon, 
\end{equation}
where $q \doteq \max_{1 \leq k \leq n} \| \D_k \Zopt \tilde{\D}_k \|_2 < 1$ and $\epsilon$ is fixed based on the initialization.
\label{thm:main1}
\end{theorem}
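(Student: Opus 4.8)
The plan is to argue by induction on $t$, splitting each iteration into a sparse-coding step and an operator-update step, and showing that the former merely preserves the error bound while the latter contracts it by the factor $q$. The inductive hypothesis is $\|\bW^{t-1}-\Wopt\|_F \le q^{t-1}\epsilon$, whose base case $t=1$ is exactly Assumption $(A_5)$. Two lemmas carry the argument.

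\textbf{Sparse-coding step.} First I would show that if $\bW^{t-1}$ is within $\epsilon$ of $\Wopt$, then column-wise hard thresholding recovers the exact support of $\Zopt$. Writing $\bW^{t-1}\bP_{(.,j)} = \Zopt_{(.,j)} + (\bW^{t-1}-\Wopt)\bP_{(.,j)}$ and using $\|\bP\|_2=1$, each column is perturbed by at most $\|\bW^{t-1}-\Wopt\|_F$, so choosing $\epsilon$ below (half) the minimum nonzero magnitude of $\Zopt$ prevents any index swap in $H_s$; this is precisely the smallness that pins down the convergence radius. Granting support recovery, $\Z^t=\Zopt+E$ with $E:=\Z^t-\Zopt$ supported inside $\mathrm{supp}(\Zopt)$, and since $H_s$ leaves retained entries unchanged, $E=\mathcal{P}\big((\bW^{t-1}-\Wopt)\bP\big)$ for the coordinate restriction $\mathcal{P}$ onto that support. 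As a coordinate restriction is nonexpansive, $\|\Z^t-\Zopt\|_F \le \|(\bW^{t-1}-\Wopt)\bP\|_F \le \|\bW^{t-1}-\Wopt\|_F \le q^{t-1}\epsilon$, the first claimed bound.

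\textbf{Operator-update step.} Here I would use that the update is the orthogonal Procrustes/polar solution: $\bW^t$ is the orthogonal polar factor of $\Z^t\bP^T$. Under $(A_4)$ one gets the two key simplifications $\bP\bP^T=\Id$ and $\Zopt\bP^T=\Wopt$, so $\Z^t\bP^T = \Wopt + E\bP^T = \Wopt(\Id+B)$ with $B:=(\Wopt)^T E\bP^T$; hence $\Omega:=(\Wopt)^T\bW^t = \mathrm{polar}(\Id+B)$ and $\|\bW^t-\Wopt\|_F=\|\Omega-\Id\|_F$. Writing $\Omega=\Id+\Delta$, orthogonality forces $\mathrm{sym}(\Delta)=-\tfrac12\Delta^T\Delta$, while symmetry and positive semidefiniteness of $\Omega^T(\Id+B)$ force $\mathrm{skew}(\Delta)=\mathrm{skew}(B)+\tfrac12(\Delta^T B-B^T\Delta)$, so the leading behavior is $\Delta\approx\mathrm{skew}(B)$. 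It remains to convert $\|\mathrm{skew}(B)\|_F$ into $q\|E\|_F$: using $\bP=(\Wopt)^T\Zopt$ I would rewrite $B=(\Wopt)^T E(\Zopt)^T\Wopt$, an orthogonal conjugation of $E(\Zopt)^T$, so $\|\mathrm{skew}(B)\|_F=\|\mathrm{skew}(E(\Zopt)^T)\|_F \le \|\mathrm{offdiag}(E(\Zopt)^T)\|_F$. Because row $k$ of $E$ is supported on $S(\Zopt_{(k,\cdot)})$, the off-diagonal part of row $k$ of $E(\Zopt)^T$ equals $E_{(k,\cdot)}(\D_k\Zopt\tilde{\D}_k)^T$, whose norm is at most $\|\D_k\Zopt\tilde{\D}_k\|_2\,\|E_{(k,\cdot)}\|_2 \le q\,\|E_{(k,\cdot)}\|_2$; squaring and summing over $k$ gives $\|\mathrm{skew}(B)\|_F \le q\|E\|_F$. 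Combining the two lemmas yields $\|\bW^t-\Wopt\|_F \le q\|\Z^t-\Zopt\|_F \le q^t\epsilon$, closing the induction, with $q<1$ guaranteed by $(A_3)$ once $(A_4)$ makes $\kappa(\Zopt)=1$.

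\textbf{Main obstacle.} The hard part is the operator-update lemma: making the polar-factor perturbation rigorous — in particular controlling the second-order terms $\tfrac12(\Delta^T B-B^T\Delta)$ and $-\tfrac12\Delta^T\Delta$ so they are genuinely dominated by the smallness of $\epsilon$ and do not degrade the clean contraction constant $q$ — together with the bookkeeping that identifies the support-restricted off-diagonal block of $E(\Zopt)^T$ with $\D_k\Zopt\tilde{\D}_k$. By contrast, the support-recovery step is comparatively routine, but it is exactly what forces the quantitative smallness of $\epsilon$ and hence the convergence radius.
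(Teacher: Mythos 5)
Your proposal is correct and follows essentially the same route as the paper: induction on $t$ with a support-recovery lemma showing the sparse-coding step is nonexpansive, followed by a first-order perturbation of the orthogonal (Procrustes/polar) factor whose leading term is exactly $\mathrm{skew}\bigl(E (\Z^{*})^{T}\bigr)$, bounded by $q\|E\|_F$ via the support-restricted blocks $\D_k \Zopt \tilde{\D}_k$. The only difference is cosmetic --- you extract the leading term from the orthogonality and symmetry constraints defining the polar factor, whereas the paper Taylor-expands $(\Zopt {\Z^1}^T)^{-1}(\Zopt {\Z^1}^T \Z^1 \Zopt^T)^{1/2}$ --- and the second-order terms you flag as the main obstacle are indeed what the paper controls separately (in its convergence-radius analysis) to justify the ``$\doteq$'' in the definition of $q$.
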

Here, the symbol ``$\doteq$" indicates equality up to first-order terms, with the other terms negligible. We will mostly only refer to the dominant component of $q$ in the discussions. The latter components are considered in more detail in the convergence radius analysis later (Section~\ref{convradiussection} and Appendix~\ref{app1}).

\begin{theorem} Under Assumptions $(A_1)$-$(A_3)$ and $(A_5)$, the iterates in Algorithm~\ref{alg:alternate} converge linearly to the underlying model in Assumptions $(A_1)$ and $(A_2)$, i.e., the Frobenius error between the iterates and the underlying model satisfies
\begin{equation}
\| \Z^t - \Zopt \|_F  \leq  q^{t-1} \epsilon, \;\; \| \bW^t - \Wopt \|_F  \leq  q^t \epsilon,  \label{thm2eq}
\end{equation}
where $q \doteq \kappa^{4}\begin{pmatrix}
\Zopt
\end{pmatrix}\max_{1\leq k \leq n} \| \D_k \Zopt \tilde{\D}_k \|_2 < 1$ and $\epsilon$ is fixed based on the initialization. 
\label{thm:main2}
\end{theorem}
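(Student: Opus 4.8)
The plan is to prove \eqref{thm2eq} by induction on $t$, linearising one full pass of Algorithm~\ref{alg:alternate} (sparse coding followed by the operator update) about the fixed point $(\Zopt,\Wopt)$. Since $\bW^t$ and $\Wopt$ are both unitary, I would parametrise the error by the skew-symmetric generator $\bm{\Omega}^t$ defined through $\bW^t \doteq (\Id+\bm{\Omega}^t)\Wopt$, so that $\norm{\bW^t-\Wopt}_F \doteq \norm{\bm{\Omega}^t}_F$; the base case is $\norm{\bm{\Omega}^0}_F\le\epsilon$ from $(A_5)$. The goal then reduces to establishing a one-step contraction $\norm{\bm{\Omega}^t}_F \le q\,\norm{\bm{\Omega}^{t-1}}_F$ with the stated $q$, from which both iterate bounds follow, the $\Z$-bound dropping out directly along the way.

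First I would analyse the sparse coding step. Using $(A_1)$--$(A_2)$ and the smallness of $\epsilon$ (the convergence radius), I would argue that for each column $j$ the perturbation $(\bW^{t-1}-\Wopt)\bP_{(\cdot,j)}$ is too small to reorder the $s$ largest entries, so $H_{s}$ recovers the true support $S(\Zopt_{(\cdot,j)})$ exactly; hence $\Z^t$ and $\Zopt$ share the support pattern encoded by the $\tilde{\D}_k$. On that common support, $\Z^t-\Zopt$ is the support-restriction of $(\bW^{t-1}-\Wopt)\bP \doteq \bm{\Omega}^{t-1}\Zopt$ (using $\Wopt\bP=\Zopt$). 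Writing this row-wise and exploiting that $\bm{\Omega}^{t-1}$ has zero diagonal, I would obtain the key identity $(\Z^t-\Zopt)_{(k,\cdot)} \doteq \bm{\Omega}^{t-1}_{(k,\cdot)}\,\D_k\Zopt\tilde{\D}_k$, whence $\norm{\Z^t-\Zopt}_F \le \big(\max_k\norm{\D_k\Zopt\tilde{\D}_k}_2\big)\norm{\bm{\Omega}^{t-1}}_F$. This is exactly where the spectral quantity of $(A_3)$ enters; the cruder estimate $\norm{\Z^t-\Zopt}_F\le\norm{\bW^{t-1}-\Wopt}_F$ (from $\norm{\bP}_2=1$) already yields the stated $\Z$-iterate bound $q^{t-1}\epsilon$.

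Next I would handle the operator update. Since $\bP^T=(\Zopt)^T\Wopt$, the matrix factorised in the update is $\Z^t\bP^T=(\G+E)\Wopt$ with $\G=\Zopt\Zopt^T$ and $E=(\Z^t-\Zopt)(\Zopt)^T$, and $\bW^t$ is its orthogonal polar (Procrustes) factor. Because $\G$ is symmetric positive definite, the factor at $E=0$ is exactly $\Wopt$, confirming the fixed point. A first-order polar-factor perturbation then gives the Sylvester equation $\G\,\bm{\Omega}^t+\bm{\Omega}^t\,\G = E-E^T$ for the skew generator. Under $(A_4)$ we have $\G=\Id$, so $\bm{\Omega}^t\doteq\tfrac12(E-E^T)$ and, since $\norm{\Zopt}_2=1$, $\norm{\bm{\Omega}^t}_F\le\norm{E}_F\le\norm{\Z^t-\Zopt}_F$; combined with the previous step this is the Theorem~\ref{thm:main1} rate $q=\max_k\norm{\D_k\Zopt\tilde{\D}_k}_2$. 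Dropping $(A_4)$, I would solve the Sylvester equation in the eigenbasis of $\G$, where each coefficient is divided by $\lambda_i+\lambda_j\ge 2\sigma_{\min}^2(\Zopt)$, and propagate the two factors of $\norm{\Zopt}_2=\sigma_{\max}(\Zopt)$ entering through $E$ and through $\bm{\Omega}^{t-1}\Zopt$; collecting these condition-number factors (conservatively) amplifies the rate to $q\doteq\kappa^{4}(\Zopt)\max_k\norm{\D_k\Zopt\tilde{\D}_k}_2$, which is below $1$ precisely by $(A_3)$.

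The main obstacle is the operator-update perturbation for general $\G$: I must justify the linearisation rigorously by bounding the neglected second-order terms (in both the polar factorisation and the support-restriction identity) and showing they stay negligible along the whole trajectory, which is what pins down the admissible $\epsilon$ in $(A_5)$ and couples back to the convergence-radius analysis of Section~\ref{convradiussection}. A secondary point is to verify that exact support recovery persists at every iteration; this follows inductively because the errors are contracting, so once $\epsilon$ is small enough the support is locked in for all $t$. Assembling the per-step contraction then delivers $\norm{\Z^t-\Zopt}_F\le q^{t-1}\epsilon$ and $\norm{\bW^t-\Wopt}_F\le q^t\epsilon$, completing the induction.
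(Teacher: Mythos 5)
Your proposal is correct in substance and reaches the stated bounds, but it organizes the argument differently from the paper in two ways. First, the paper keeps the sparse-coding step crude (Lemma~\ref{lem:z} only gives $\|\Z^t-\Zopt\|_F\le\|\E^{t-1}\|_F$) and extracts the factor $\max_k\|\D_k\Zopt\tilde{\D}_k\|_2$ in the \emph{operator update}, from the zero diagonal of the skew matrix $\boldsymbol{\Delta}^t\Zopt^T-\Zopt{\boldsymbol{\Delta}^t}^T$ together with the row-support containment $S(\boldsymbol{\Delta}^t_{(k,\cdot)})\subseteq S(\Zopt_{(k,\cdot)})$; you instead extract it already in the sparse-coding step via the identity $(\Z^t-\Zopt)_{(k,\cdot)}\doteq\bm{\Omega}^{t-1}_{(k,\cdot)}\D_k\Zopt\tilde{\D}_k$, which uses the zero diagonal of the skew generator $\bm{\Omega}^{t-1}$ — algebraically equivalent, but note your version needs $\bW^{t-1}$ to be unitary (automatic for $t\ge2$, and only up to $O(\epsilon^2)$ for a general $\bW^0$ satisfying $(A_5)$), whereas the paper's placement needs no such hypothesis. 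Second, and more substantively, for the operator update the paper Taylor-expands $(\Zopt{\Z^t}^T)^{-1}(\Zopt{\Z^t}^T\Z^t\Zopt^T)^{1/2}$ and manipulates the square-root gradient $(\G\oplus\G)^{-1}$ via Kronecker-sum identities to bound the linear map by $\kappa^4(\Zopt)/2$; your Sylvester equation $\G\bm{\Omega}^t+\bm{\Omega}^t\G=E-E^T$ for the skew generator of the polar factor is the cleaner, standard perturbation route, and solving it in the eigenbasis of $\G$ gives $\|\bm{\Omega}^t\|_F\le\|E-E^T\|_F/(2\sigma_{\min}^2(\Zopt))$, i.e.\ a contraction factor $\kappa^2(\Zopt)\max_k\|\D_k\Zopt\tilde{\D}_k\|_2$ — strictly tighter than the paper's $\kappa^4$, which you then conservatively inflate to match the statement. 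Both proofs defer the same technical debt (controlling the neglected second-order terms and the admissible $\epsilon$) to the convergence-radius analysis, so your sketch is at the same level of completeness as the paper's; the only point you should make explicit is the unitarity caveat on $\bW^0$ for the refined first-step identity, or simply fall back on the crude bound $\|\Z^1-\Zopt\|_F\le\|\E^0\|_F$ at $t=1$ as the paper does.
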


Next, we discuss special cases of Theorem~\ref{thm:main1} and Theorem~\ref{thm:main2} when $s=2$. In the case of Theorem~\ref{thm:main1}, a simple intuitive condition that the supports of no two rows of $\Zopt$ fully overlap ensures linear convergence ($q<1$), i.e., ensures Assumption $(A_3)$ holds.

\begin{corollary}{\textbf (Case $s=2$)} For Theorem~\ref{thm:main1}, when $s=2$ and no two rows of $\Zopt$ have identical support, then $q<1$ holds in Assumption $(A_3)$.
For Theorem~\ref{thm:main2} (without Assumption ($A_4$)), when $s=2$, then $q<1$ holds in Assumption ($A_3$) if  $\| \row{\Zopt} |_{S(
\Zopt_{(i,.)}) \cap S(
\Zopt_{(k,.)})} \|_{2} < \kappa^{-4}(\Zopt)$ for all $i \neq k$,
where the norm is computed only with respect to the elements of $\row{\Zopt}$ in the support $S(\Zopt_{(i,.)}) \cap S(\Zopt_{(k,.)})$.
 \label{coro1}
\end{corollary}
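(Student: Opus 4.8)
The plan is to reduce both assertions to a closed-form evaluation of $\| \D_k \Zopt \tilde{\D}_k \|_2$ that exploits the extreme sparsity created when $s=2$. Writing $\M_k := \D_k \Zopt \tilde{\D}_k$, I would first record its entries: since $\D_k$ annihilates row $k$ and $\tilde{\D}_k$ keeps only the columns $j \in S(\Zopt_{(k,\cdot)})$, the $(p,j)$ entry of $\M_k$ equals $\Zopt_{(p,j)}$ when $p \neq k$ and $j \in S(\Zopt_{(k,\cdot)})$, and is zero otherwise. Thus $\M_k$ is simply $\Zopt$ with its $k$th row deleted and every column outside the support of that row deleted as well.

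The key structural step is to observe that for $s=2$ \emph{every surviving column of $\M_k$ carries at most one nonzero}. Indeed, a retained column index $j$ satisfies $j \in S(\Zopt_{(k,\cdot)})$, so row $k$ is one of the two nonzeros of $\col{\Zopt}$; the factor $\D_k$ removes exactly that entry, leaving a single nonzero in column $j$ of $\M_k$, located at some row $i(j) \neq k$. Because no column of $\M_k$ then holds two nonzeros, distinct rows of $\M_k$ have disjoint column supports, so $\M_k \M_k^T$ is diagonal with $(p,p)$ entry equal to the squared norm of the $p$th row of $\M_k$. Consequently $\| \M_k \|_2 = \max_{i \neq k} \| \Zopt_{(i,\cdot)}|_{S(\Zopt_{(i,\cdot)}) \cap S(\Zopt_{(k,\cdot)})} \|_2$; the spectral norm collapses to the largest row norm of $\Zopt$ restricted to a pairwise support intersection.

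With this identity both cases follow by bookkeeping. For Theorem~\ref{thm:main2}, Assumption $(A_3)$ demands $\kappa^{4}(\Zopt) \max_k \| \M_k \|_2 < 1$; substituting the identity shows this is exactly the requirement that $\| \Zopt_{(i,\cdot)}|_{S(\Zopt_{(i,\cdot)}) \cap S(\Zopt_{(k,\cdot)})} \|_2 < \kappa^{-4}(\Zopt)$ for every ordered pair $i \neq k$, giving $q < 1$. For Theorem~\ref{thm:main1}, Assumption $(A_4)$ forces $\kappa(\Zopt) = 1$ and makes the rows of $\Zopt$ unit vectors, so the restricted norm $\| \Zopt_{(i,\cdot)}|_{S(\Zopt_{(i,\cdot)}) \cap S(\Zopt_{(k,\cdot)})} \|_2$ is strictly below $\| \Zopt_{(i,\cdot)} \|_2 = 1$ precisely when $S(\Zopt_{(i,\cdot)}) \cap S(\Zopt_{(k,\cdot)})$ is a strict subset of $S(\Zopt_{(i,\cdot)})$. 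Requiring this for all $i \neq k$ (which in particular rules out coincident supports, and uses irreducibility so that each row is nonzero) then delivers $q < 1$.

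I expect the main obstacle to be the structural reduction itself, namely pinning down the one-nonzero-per-column claim and the resulting diagonality of $\M_k \M_k^T$ cleanly, since this is the single place where $s=2$ is used in an essential way; everything after that is substitution. A secondary subtlety is tracking the \emph{strict} inequality: I must argue that restricting a unit row to a proper sub-support strictly decreases its norm, which relies on the discarded coordinates carrying positive mass (guaranteed by $s=2$ together with the support condition), and I should verify that the maxima over $k$ and over $i \neq k$ interact correctly with the $\kappa^{4}(\Zopt)$ prefactor and with the first-order reading of $q$ signalled by the symbol $\doteq$ in the two theorems.
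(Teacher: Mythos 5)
Your argument is correct and coincides with the paper's own proof: both show that $\M_k \M_k^T$ is diagonal when $s=2$ (the paper by noting that the triple intersections $S(\Zopt_{(i,.)}) \cap S(\Zopt_{(j,.)}) \cap S(\Zopt_{(k,.)})$ are empty for distinct $i,j,k$, you by the equivalent observation that each surviving column of $\M_k$ retains at most one nonzero once row $k$ is deleted), then read off $\| \M_k \|_2$ as the largest restricted row norm and substitute into Assumption $(A_3)$. The only difference is cosmetic --- you are marginally more explicit that the Theorem~\ref{thm:main1} case requires $S(\Zopt_{(i,.)}) \cap S(\Zopt_{(k,.)})$ to be a \emph{proper} subset of $S(\Zopt_{(i,.)})$ so that the discarded coordinates carry positive mass, which is the same strictness the paper extracts from the non-identical-support hypothesis together with $(A_4)$.
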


Remark~\ref{rm11} discusses the effect of dropping the data normalization assumption (in $(A_1)$) on the convergence rate. In particular, the convergence rate factor $q$ is modified by being normalized by $\| \bP \|_{2} = \| \Zopt \|_{2}$, keeping it invariant to scaling of $\Zopt$.
\begin{remark}
When the unit spectral norm condition on $\bP$ in Assumption $(A_1)$ is dropped, the $\| \bW^t - \Wopt \|_F$ bound in Theorem~\ref{thm:main2} holds with $q \doteq \begin{pmatrix}
\kappa^{4}\begin{pmatrix}
\Zopt
\end{pmatrix}/\| \bP \|_{2}
\end{pmatrix}\max_{1\leq k \leq n} \| \D_k \Zopt \tilde{\D}_k \|_2$. The bound $\| \Z^t - \Zopt \|_F \leq q^{t-1} \epsilon$ as in \eqref{thm2eq} holds with the aforementioned $q$ but with $\epsilon$ replaced by $\| \bP \|_{2} \epsilon$. \label{rm11}
\end{remark}

As will be clear from the proofs in Section~\ref{secproof}, when Assumption $(A_2)$ stating $\| \col{\Z}^* \|_0 = s$ is relaxed to $\| \col{\Z}^* \|_0 \leq s$, then the (common) linear contraction factor $q$ for the error in each iteration in Theorem~\ref{thm:main2} (with respect to previous iteration's error) is replaced with $q(t) \doteq \kappa^{4}
\begin{pmatrix}
\Zopt
\end{pmatrix} \max_{1 \leq k \leq n} \| \D_k \Zopt \tilde{\D}_k^{t} \|_2$,
where $\tilde{\D}_k^{t}$ is defined similar to $\tilde{\D}_k$ but with respect to $S(\Z^{t}_{(k,\cdot)})$ (which is shown in Section~\ref{secproof} to contain $S(\Zopt_{(k,\cdot)})$).

Finally, we have the following generalization of Theorem~\ref{thm:main2} for noisy models. The $\epsilon$ in Assumption $(A_5)$ would be smaller in the presence of noise and the noise is assumed small enough so that the support recovery and Taylor Series convergence properties used in the proofs in Section~\ref{secproof} hold.

\begin{remark}
When a noisy model of the data is used in Assumption $(A_1)$, i.e., $\bW^* \bP = \Z^* + \mathbf{H}$, where $\mathbf{H}$ denotes noise, then for sufficiently small noise, Theorem~\ref{thm:main2} holds, except that the term $C \| \mathbf{H} \|_F$, where $C>0$ is a constant, is added to the right hand side of \eqref{thm2eq}. \label{rm12}
\end{remark}

\subsubsection{Convergence Rate} \label{sec2mainb}
While our main results assume that the spectral property in Assumption $(A_3)$ holds, the next result discusses the scenario and models under which the assumption $q<1$ is generally valid.
\begin{proposition} Suppose the locations of the $s$ non-zeros in each column of $\Zopt$ are chosen independently and uniformly at random, and the non-zero entries are i.i.d. with mean zero and variance $n/sN$. Then, for fixed $s$, $n$, and $s<n$, we have that  $q_{N} \triangleq \begin{pmatrix}
\kappa^{4}\begin{pmatrix}
\Zopt
\end{pmatrix}/\| \bP \|_{2}
\end{pmatrix}\max_{1\leq k \leq n} \| \D_k \Zopt \tilde{\D}_k \|_2$ $ < 1$ for large enough $N$ with high probability. In particular, we have the following limit almost surely:
\begin{equation}
q^{*}=\lim_{N\to \infty} q_{N} = \sqrt{\frac{s-1}{n-1}}.
\label{qlim1}
\end{equation}
\label{conj}
\end{proposition}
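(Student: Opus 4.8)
The plan is to show that each of the two factors defining $q_N$ concentrates around a deterministic limit as $N \to \infty$, and that the product of these limits equals $\sqrt{(s-1)/(n-1)}$. The guiding observation is that the columns of $\Zopt$ are i.i.d., so every quantity of interest is a sum of independent contributions over the $N$ columns, amenable to a law-of-large-numbers / concentration argument. Since $n$ and $s$ are fixed, all matrices involved have fixed ($n\times n$) size, so entrywise convergence upgrades immediately to convergence in spectral norm, and the maximum over the finitely many indices $k$ causes no difficulty.

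First I would dispose of the prefactor $\kappa^4(\Zopt)/\|\bP\|_2$. Since $\bW^*$ is unitary and $\bW^*\bP = \Zopt$, we have $\|\bP\|_2 = \|\Zopt\|_2$, so this factor equals $\sigma_{\max}^3(\Zopt)/\sigma_{\min}^4(\Zopt)$. I would then compute $\E[\Zopt\Zopt^T]$: using that a fixed index lies in a column's support with probability $s/n$ and that the nonzero entries have variance $n/(sN)$, the diagonal entries have expectation $1$, while the off-diagonal entries have expectation $0$ because the nonzero values are independent and mean zero; hence $\E[\Zopt\Zopt^T] = \Id$. Controlling the variance of each entry (of order $O(1/N)$ under a mild normalized fourth-moment condition) then gives $\Zopt\Zopt^T \to \Id$, so $\sigma_{\max}(\Zopt),\sigma_{\min}(\Zopt) \to 1$ and the prefactor tends to $1$.

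The substance lies in the term $\max_{1\le k\le n}\|\D_k \Zopt \tilde{\D}_k\|_2$. Fixing $k$, I would set $M_k := \D_k \Zopt \tilde{\D}_k$ and study $M_k M_k^T = \D_k \Zopt \tilde{\D}_k \Zopt^T \D_k$, whose only nonzero block is the $(n-1)\times(n-1)$ submatrix indexed by the rows/columns $\neq k$; its $(a,b)$ entry is $\sum_{j \in S(\Zopt_{(k,\cdot)})} \Zopt_{(a,j)}\Zopt_{(b,j)}$, a sum over exactly those columns whose support contains $k$. The key combinatorial fact is that, conditioned on $k$ lying in a column's support, any other fixed index $a \neq k$ lies in that support with probability $(s-1)/(n-1)$; combining this with the expected number $Ns/n$ of such columns and the entry variance $n/(sN)$, the diagonal entries of $\E[M_k M_k^T]$ equal $(s-1)/(n-1)$, while the off-diagonals vanish by the same mean-zero independence used above. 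Thus $\E[M_k M_k^T] \to \tfrac{s-1}{n-1}\Id$ on the relevant block, giving $\|M_k\|_2 \to \sqrt{(s-1)/(n-1)}$; the maximum over the $n$ indices $k$ preserves this limit. Multiplying by the prefactor limit of $1$ yields $q^* = \sqrt{(s-1)/(n-1)}$, which is strictly less than $1$ since $s<n$, and the finite-$N$ high-probability claim follows from the same concentration (Chebyshev) bounds.

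The main obstacle is the data-dependence of $\tilde{\D}_k$: the selected columns are precisely those in which $\Zopt$ is itself nonzero in row $k$, so $M_k$ is built from a random, $\Zopt$-dependent subset of columns rather than a fixed one. I would handle this by conditioning on the support pattern (the set of columns containing $k$, which determines $\tilde{\D}_k$); after conditioning, the surviving columns are i.i.d. and their nonzero values are independent, so the conditional moment computation above is clean, and one then integrates out the (binomial, hence concentrated) number of such columns. A secondary technical point is that the nonzero-entry distribution changes with $N$ (a triangular array, since the variance scales as $1/N$), so the \emph{almost sure} limit requires a strong-law / Borel--Cantelli argument under a mild uniform moment condition rather than the classical i.i.d.\ strong law, whereas the high-probability statement follows directly from the $O(1/N)$ variance bounds.
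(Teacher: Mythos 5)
Your proposal is correct and follows essentially the same route as the paper: both arguments reduce $q_N$ to the entrywise almost-sure limits $\Zopt\Zopt^T \to \Id$ and $\D_k\Zopt\tilde{\D}_k\tilde{\D}_k^T\Zopt^T\D_k \to \frac{s-1}{n-1}\D_k$ via a law of large numbers over the i.i.d.\ columns, using the same support probabilities $s/n$ and $s(s-1)/(n(n-1))$ and the nonzero second moment $n/(sN)$. The paper sidesteps your triangular-array concern by working with the rescaled matrix $\sqrt{N}\Zopt$, whose entries have a fixed distribution, so the classical strong law applies directly; otherwise the two arguments coincide.
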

Proposition~\ref{conj} holds for several well-known distributions of $\Zopt$ such as when its column supports are drawn independently and uniformly at random and the nonzero entries are a) i.i.d. with $\entry{\Zopt} \sim \mathcal{N}(0,\frac{n}{sN})$ or b) i.i.d. scaled (by $\sqrt{n/sN}$) random signs with ``+" and ``-" being equally probable. Section~\ref{sec3} empirically shows the algorithm's convergence and the behavior of $q$ when $s=O(n)$, a commonly used sparsity criterion in many applications (i.e., with $s= \alpha n$, where $\alpha<1$ is a small fraction).

\subsubsection{Convergence Radius} \label{convradiussection}
While the main convergence results make use of Assumption $(A_5)$, here we discuss the behavior of the convergence radius $\epsilon$, including when the number of training signals $N \to \infty$. The following proposition and remark characterize a sufficient $\epsilon$ in Assumption $(A_5)$ for Theorem~\ref{thm:main1} and Theorem~\ref{thm:main2}.
\begin{proposition} 
The iterate convergence in Theorem~\ref{thm:main1} holds when the radius of convergence $\epsilon$ in Assumption~$(A_5)$ satisfies $\epsilon < \min(\epsilon_1,\epsilon_2)$, where $\epsilon_1 = 0.5 \min_{1\leq j \leq N} \beta \left( \col{\Z}^*/\|\col{\Z}^*\|_{2} \right)$ with $\beta(\cdot)$ computing the smallest nonzero magnitude in a vector, and $\epsilon_2 = \max_{0 \leq \epsilon_0 < \sqrt{2}-1}f(\epsilon_0)$ with $f(\epsilon_0) \triangleq \min \begin{pmatrix}
\frac{1 - \max_{1\leq k \leq n} \| \D_k \Zopt \tilde{\D}_k \|_2}{C(\epsilon_0)},\epsilon_0
\end{pmatrix}$, and $C(\epsilon_0)$ is defined as follows:
\begin{equation}
C(\epsilon_0)= 2 + \frac{9 \sqrt{2}}{8(1-2\epsilon_0-\epsilon_0^2)^{1.5}}+\frac{\sqrt{2}}{1-\epsilon_0}.    
\label{constant}    
\end{equation}
\label{convradius}
\end{proposition}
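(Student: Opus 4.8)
The plan is to construct the radius $\epsilon$ by separately guaranteeing the two ingredients that the proof of Theorem~\ref{thm:main1} relies on: (i) exact support recovery in the sparse coding step, and (ii) a strict per-iteration contraction once the higher-order terms hidden behind the ``$\doteq$'' in the definition of $q$ are controlled. The quantity $\epsilon_1$ will certify (i) and $\epsilon_2$ will certify (ii), and taking $\epsilon < \min(\epsilon_1,\epsilon_2)$ forces both at every iteration via an induction on $t$.

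First I would handle support recovery. Writing $\E = \bW^{t-1} - \Wopt$, the sparse coding input is $\bW^{t-1}\bP = \Zopt + \E\bP$, and since $\Wopt$ is unitary we have $\norm{\col{\Z}^*}_2 = \norm{\col{\bP}}_2$ for every column $j$. The column-wise perturbation obeys $\norm{\E\col{\bP}}_\infty \le \norm{\E\col{\bP}}_2 \le \norm{\E}_F\,\norm{\col{\bP}}_2$, so the thresholding operator $H_s(\cdot)$ returns the correct support $S(\col{\Z}^*)$ as soon as this perturbation is strictly smaller than half the smallest true nonzero magnitude, $\beta(\col{\Z}^*) = \beta(\col{\Z}^*/\norm{\col{\Z}^*}_2)\,\norm{\col{\bP}}_2$. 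Dividing out $\norm{\col{\bP}}_2$ and taking the worst column gives exactly the bound $\norm{\E}_F < \epsilon_1 = 0.5\,\min_{1\le j\le N}\beta(\col{\Z}^*/\norm{\col{\Z}^*}_2)$. Consequently, as long as the iterate stays inside the ball of radius $\epsilon_1$, the update $\Z^t$ reduces to the linear masking operation used in the proof of Theorem~\ref{thm:main1}.

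Second, I would make the leading-order contraction rigorous. Once the support is correct, $\bW^t$ is the orthogonal polar factor of $\Z^t\bP^T$ (the $\V^t{\U^t}^T$ factor from the SVD in Algorithm~\ref{alg:alternate}), and the proof of Theorem~\ref{thm:main1} expands this factor about $\Wopt$ to obtain $\norm{\bW^t-\Wopt}_F \le q\,\norm{\bW^{t-1}-\Wopt}_F$ with $q \doteq \max_{1\le k\le n}\norm{\D_k\Zopt\tilde{\D}_k}_2$ to first order. To upgrade ``$\doteq$'' to a genuine inequality I would carry the Taylor remainder explicitly: setting $\norm{\bW^{t-1}-\Wopt}_F \le \epsilon_0$, a quantitative perturbation bound for the orthogonal polar factor produces a correction of the form $\norm{\bW^t-\Wopt}_F \le \big(q + C(\epsilon_0)\,\norm{\bW^{t-1}-\Wopt}_F\big)\,\norm{\bW^{t-1}-\Wopt}_F$, where $C(\epsilon_0)$ collects the constants of that remainder. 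The denominator $(1-2\epsilon_0-\epsilon_0^2)^{1.5}$ and the restriction $\epsilon_0<\sqrt2-1$ arise from lower-bounding the relevant singular values of the perturbed matrix (which stay positive precisely while $1-2\epsilon_0-\epsilon_0^2>0$), while the half-integer power reflects the second-order term of the inverse-square-root appearing in the polar factorization. Requiring the bracket to be $<1$ gives $\norm{\bW^{t-1}-\Wopt}_F < (1-q)/C(\epsilon_0)$; intersecting this with the validity condition $\norm{\bW^{t-1}-\Wopt}_F\le\epsilon_0$ yields $f(\epsilon_0)=\min\big((1-q)/C(\epsilon_0),\,\epsilon_0\big)$, and since $\epsilon_0$ is a free parameter the best admissible radius is $\epsilon_2=\max_{0\le\epsilon_0<\sqrt2-1}f(\epsilon_0)$.

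Finally I would close by induction: if $\norm{\bW^0-\Wopt}_F\le\epsilon<\min(\epsilon_1,\epsilon_2)$, then support recovery holds and the contraction factor is strictly below one at the first step, so $\norm{\bW^1-\Wopt}_F<\norm{\bW^0-\Wopt}_F\le\epsilon$ and the iterate never leaves the ball; the same two properties then hold at every subsequent step, reproducing the geometric decay of Theorem~\ref{thm:main1}. The main obstacle is step (ii): deriving the explicit constant $C(\epsilon_0)$ in \eqref{constant} demands a fully quantitative perturbation analysis of the orthogonal polar factor together with an honest bound on its second-order remainder, since a first-order (``$\doteq$'') estimate alone cannot certify contraction on a ball of positive radius. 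The support-recovery radius $\epsilon_1$ is, by comparison, routine.
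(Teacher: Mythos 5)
Your proposal follows essentially the same route as the paper's Appendix~A proof: $\epsilon_1$ from the half-gap support-recovery condition in the sparse coding step, and $\epsilon_2$ from requiring convergence of the matrix-inverse and square-root series (whence $\epsilon_0<\sqrt{2}-1$ and the $(1-2\epsilon_0-\epsilon_0^2)^{1.5}$ denominator), bounding the second-order remainder by $C(\epsilon_0)\epsilon^2$ so that $q+C(\epsilon_0)\epsilon<1$, and then taking $f(\epsilon_0)=\min\bigl((1-q)/C(\epsilon_0),\,\epsilon_0\bigr)$ maximized over $\epsilon_0$. The only piece you defer --- the explicit four-term bookkeeping of the remainder that yields the constants $2$, $9\sqrt{2}/8$, and $\sqrt{2}/(1-\epsilon_0)$ --- is exactly the computation the paper carries out in \eqref{eq:zp1}--\eqref{eq:zp4}, and you correctly identify where each term originates.
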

In Proposition~\ref{convradius}, $\epsilon_1$ arises from the sparse coding step of Algorithm~\ref{alg:alternate} and ensures recovery of the support of the underlying sparse coefficients. The bound $\epsilon_2$ arises in the operator update step of Algorithm~\ref{alg:alternate} and is primarily to ensure the convergence and boundedness of Taylor Series expansions discussed in the proof. The largest permissible $\epsilon_2$ that suffices for Theorem~\ref{thm:main1} is obtained by maximizing the function $f(\epsilon_0)$ over $0 \leq \epsilon_0 < \sqrt{2}-1$. The end points of this interval both correspond to $f(\epsilon_0)=0$. So the maximum of the continuous (nonnegative) function $f(\epsilon_0)$ would occur inside the interval. The constant $C(\epsilon_0)$ is monotone decreasing as $\epsilon_0 \to 0$ with the limiting $C(0)=5.01$. The result indicates that the radius of convergence depends on the properties of the underlying sparse coefficients.

\begin{remark}
Proposition~\ref{convradius} holds for Theorem~\ref{thm:main2} but with $\epsilon_2$ depending on $\kappa(\Zopt)$. In particular, as $\kappa(\Zopt) \to 1$, $\epsilon_2$ takes the same form as in Proposition~\ref{convradius}. Moreover, for the distributions in Proposition~\ref{conj}, $\kappa(\Zopt) \to 1$ and $\epsilon_2 \to \max_{0 \leq \epsilon_0 < \sqrt{2}-1} \min \begin{pmatrix}
\frac{1 - \sqrt{\frac{s-1}{n-1}}}{C(\epsilon_0)},\epsilon_0
\end{pmatrix}$ almost surely as $N \to \infty$.
 \label{rm1b}
\end{remark}
Remark~\ref{rm1b} indicates that $\epsilon_1$ in Proposition~\ref{convradius} remains unchanged for Theorem~\ref{thm:main2}. However, the $\epsilon_2$ arising from the operator update step depends on $\kappa(\Zopt)$. For example, a bound of $\kappa^{-2}(\Zopt)$ (smaller for larger condition numbers) ensures the convergence of one of the Taylor Series in the proof in Section~\ref{thmmain2proof}. Importantly, for the distributions of $\Zopt$ in Proposition~\ref{conj}, the limiting value of $\epsilon_2$ stated in Remark~\ref{rm1b} depends only on the ratio $(s-1)/(n-1)$.

The limiting behavior of $\epsilon_1 = 0.5 \min_{1\leq j \leq N} \beta \left( \col{\Z}^*/\|\col{\Z}^*\|_{2} \right)$ as $N \to \infty$ would depend on the distribution of $\Zopt$. Appendix~\ref{app2} discusses some example distributions that satisfy the assumptions in Proposition~\ref{conj} and have the nonzero values bounded away from zero, for which $\epsilon_1 \geq C/\sqrt{s}$ holds for each $N$, where $C$ is a positive constant. Practically, peak physical intensity and numerical precision bound nonzero entries of the sparse coefficient matrix. In practice, we expect the radius $\epsilon$ in Proposition~\ref{convradius} to be limited more by $\epsilon_1$, since $\epsilon_2$ depends approximately only on the ratio $s/n$ for large $N$ (Remark~\ref{rm1b})
and would be a constant when $s \propto n$.

\subsubsection{Discussion of Generalization of Convergence Radius Assumptions} \label{sec2maind}
Here, we discuss the effect of  $\epsilon$ values larger than in Proposition~\ref{convradius} (or Remark~\ref{rm1b}) on the convergence of Algorithm~\ref{alg:alternate}.
The following lemma shows the behavior of the sparse coding error for general algorithm initializations (or general $\epsilon$ values that may not ensure support recovery).
\begin{lemma}
For $t=1$ in Algorithm~\ref{alg:alternate} and under Assumptions $(A_1)$ and $(A_2)$ and denoting $\E^{0} = \bW^{0} - \bW^*$ with $\left \| \E^0 \right \|_{F} \leq \epsilon$ for some non-negative $\epsilon$, we have that 
\begin{equation}
\| \Z^1 - \Zopt \|_F \leq 2\begin{Vmatrix}
\E^0
\end{Vmatrix}_{F} \leq 2 \epsilon.
\label{errorsparse}
\end{equation}
\label{sparsecodegeneral}
\end{lemma}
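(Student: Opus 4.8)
The plan is to reduce the Frobenius bound to a column-by-column estimate, since both $\Z^1$ and $\Zopt$ are built column-wise. By the sparse coding step of Algorithm~\ref{alg:alternate} at $t=1$ we have $\col{\Z}^1 = H_s\!\left(\bW^0 \col{\bP}\right)$, while Assumption $(A_1)$ gives $\col{\Zopt} = \bW^* \col{\bP}$. Writing $\bW^0 = \bW^* + \E^0$, the input to the thresholding operator decomposes as $\bW^0 \col{\bP} = \col{\Zopt} + \E^0 \col{\bP}$, so the perturbation of the $j$th computed code away from its true value is exactly $\E^0 \col{\bP}$. I would fix a column index $j$ and abbreviate $\mathbf{v} = \bW^0 \col{\bP}$ and $\mathbf{e} = \E^0 \col{\bP}$, so that $\mathbf{v} = \col{\Zopt} + \mathbf{e}$.

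The key step is to exploit that $H_s(\mathbf{v})$ is the best $s$-sparse approximation of $\mathbf{v}$ in the $\ell_2$ sense: among all vectors with at most $s$ nonzeros, retaining the $s$ largest-magnitude entries of $\mathbf{v}$ minimizes the $\ell_2$ distance to $\mathbf{v}$. By Assumption $(A_2)$, $\col{\Zopt}$ has exactly $s$ nonzeros and is therefore a feasible competitor, which yields $\|H_s(\mathbf{v}) - \mathbf{v}\|_2 \leq \|\col{\Zopt} - \mathbf{v}\|_2 = \|\mathbf{e}\|_2$. A single triangle inequality then produces
\begin{equation*}
\|\col{\Z}^1 - \col{\Zopt}\|_2 = \|H_s(\mathbf{v}) - \col{\Zopt}\|_2 \leq \|H_s(\mathbf{v}) - \mathbf{v}\|_2 + \|\mathbf{v} - \col{\Zopt}\|_2 \leq 2\|\mathbf{e}\|_2 ,
\end{equation*}
which is where the factor of $2$ originates.

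Finally, I would reassemble the column bounds into the Frobenius norm: squaring and summing over $j$ gives $\|\Z^1 - \Zopt\|_F^2 \leq 4\sum_j \|\E^0 \col{\bP}\|_2^2 = 4\|\E^0 \bP\|_F^2$, hence $\|\Z^1 - \Zopt\|_F \leq 2\|\E^0 \bP\|_F$. Submultiplicativity $\|\E^0 \bP\|_F \leq \|\E^0\|_F \|\bP\|_2$ together with the normalization $\|\bP\|_2 = 1$ from Assumption $(A_1)$ then gives $\|\E^0 \bP\|_F \leq \|\E^0\|_F \leq \epsilon$, completing the chain $\|\Z^1 - \Zopt\|_F \leq 2\|\E^0\|_F \leq 2\epsilon$ claimed in \eqref{errorsparse}.

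I do not anticipate a genuine obstacle here, since the lemma is deliberately general: it invokes neither support recovery, nor Assumption $(A_3)$, nor any smallness of $\epsilon$ beyond $\|\E^0\|_F \leq \epsilon$. The only point requiring care is the optimality characterization of $H_s$, which must be applied with the competitor $\col{\Zopt}$ being exactly (at most) $s$-sparse; everything else is a routine combination of the triangle inequality and standard norm submultiplicativity. This generality is presumably the reason the lemma can serve as the crude starting bound from which the sharper, support-recovery–based analysis of the main theorems is later developed.
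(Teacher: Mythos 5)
Your proof is correct, and it takes a genuinely different and shorter route than the paper's. You invoke the quasi-optimality of hard thresholding: $H_s(\mathbf{v})$ is the best $s$-sparse $\ell_2$-approximation of $\mathbf{v}$, and since $\col{\Z}^*$ is itself $s$-sparse by $(A_2)$ it is a feasible competitor, giving $\|H_s(\mathbf{v})-\mathbf{v}\|_2 \leq \|\mathbf{e}\|_2$ and hence the factor $2$ by one triangle inequality; the column bounds then assemble into the Frobenius bound via $\|\E^0\bP\|_F \leq \|\E^0\|_F\|\bP\|_2 = \|\E^0\|_F$. The paper instead (Appendix~C) performs an explicit combinatorial accounting of support swaps: it splits $\col{\Z}^1-\col{\Z}^*$ into two disjointly supported pieces, derives for each true-support index $k$ dropped in favor of a new index $p$ the inequality $|\Zopt_{(k,j)}| \leq |\mathbf{A}_{(k,j)}| + |\mathbf{A}_{(p,j)}|$ from the thresholding selection rule plus the reverse triangle inequality, and then bounds the resulting sum of squares by $4\|\mathbf{A}_{(\cdot,j)}\|_2^2$. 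Both arguments land on the same constant and the same final bound. Your version is cleaner and more robust (it needs only that $\col{\Z}^*$ is at most $s$-sparse, with no case analysis over how many support elements are exchanged); the paper's version is longer but yields side information — in particular the pointwise bound on the magnitudes of dropped true coefficients — that ties into the support-recovery discussion surrounding Lemma~2 and Section~2.3.4. The one step you should state carefully is the optimality characterization of $H_s$ itself (for every size-$s$ support $T$ the best approximation supported on $T$ is the restriction $\mathbf{v}|_T$, and the residual $\|\mathbf{v}\|_2^2 - \|\mathbf{v}|_T\|_2^2$ is minimized by taking $T$ to be the $s$ largest-magnitude entries), but that is standard and unproblematic.
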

Appendix~\ref{app3} provides the proof of Lemma~\ref{sparsecodegeneral}. Lemma~\ref{sparsecodegeneral} suggests that regardless of how close the initial transform is to the underlying model, the bound on the sparse coding error is at most twice that in Theorem~\ref{thm:main2}.
In this case, the contraction factor in the operator update step would need to satisfy $q<0.5$ in order to consistently decrease the error.
We have $q(t) \, \doteq \, \kappa^{4}\begin{pmatrix}
\Zopt
\end{pmatrix} \max_{1 \leq k \leq n} \| \D_k \Zopt \tilde{\D}_k^{t} \|_2$ for the operator update step, where $\tilde{\D}_k^t$ is a diagonal matrix with ones at entries $(i,i)$ for $i \in S(\Z^{t}_{(k,\cdot)} - \Zopt_{(k,\cdot)})$ and zeros elsewhere. 
If the supports of $\Z^{t}_{(k,\cdot)}$ and $\Zopt_{(k,\cdot)}$ are mismatched, then $\tilde{\D}_k^t$ could in general have more ones than $\tilde{\D}_k$. In other words, $q(t)$ could be larger than the $q$ in Theorem~\ref{thm:main2}.
Thus, the (larger) effective (or overall) factor of $2 q(t)$ could lead to slow convergence initially from more general initializations.
This is also corroborated by the experiments in Section~\ref{sec3}, where slower convergence is observed from general initializations until the underlying support is fully recovered, at which point, the linear convergence behavior predicted in Theorem~\ref{thm:main2} is fully observed, with a similar rate of convergence regardless of initialization.

\subsection{Proofs of Theorems, Corollary, Propositions, and Remarks} \label{secproof}
We first prove Theorem~\ref{thm:main1} and then the proof of Theorem~\ref{thm:main2} is briefly presented highlighting the distinctions arising from the generalization. The proof of Corollary~\ref{coro1} is presented for the case of Theorem~\ref{thm:main1} (the proof for the case of Theorem~\ref{thm:main2} is similar). The proof of Remark~\ref{rm12} follows along the same lines as those of the theorems, and is omitted. Finally, the proof of Proposition~\ref{conj} is presented. The proofs of Proposition~\ref{convradius} and Remark~\ref{rm1b} are outlined in Appendix~\ref{app1}.

To prove Theorem~\ref{thm:main1}, we will first prove two supporting lemmas that establish properties of the iterates. First, Lemma~\ref{lem:z} shows that the error between the iterate $\Z^1$ and $\Zopt$ is bounded and the bound depends on the approximation error with respect to $\Wopt$ for the initial $\bW^{0}$ (bounded by $\epsilon$ as in Assumption $(A_5)$). Lemma~\ref{lem:w} and Lemma~\ref{lem:z_not_orth} show that the error between the first $\bW$ iterate ($\bW^{1}$) and $\Wopt$ is bounded above by $q \epsilon$ for Theorem~\ref{thm:main1} and Theorem~\ref{thm:main2}, respectively. Similar bounds are shown to hold for subsequent iterations. Therefore, for Algorithm~\ref{alg:alternate} to converge linearly, one only needs $q<1$ as in Assumption $(A_3)$ or as established by Proposition~\ref{conj}. The scaling indicated in Remark~\ref{rm11} follows from the proofs of Lemmas~\ref{lem:z} and \ref{lem:z_not_orth}.

\subsubsection{Proof of Theorem~\ref{thm:main1}}
For our proofs, we define the sequences $\left \{ \E^t \right \}$ and $\left \{ \mathbf{\Delta}^t \right \}$ such that 
\begin{align}
\bW^t &= \Wopt + \E^t, \label{eq:deltaw}\\
\Z^t &= \Zopt + \mathbf{\Delta}^t. \label{eq:deltaz}
\end{align}

\begin{lemma}{\textbf (Approximation error for $\Z$)} For $t=1$ in Algorithm~\ref{alg:alternate} and under Assumptions $(A_1) - (A_5)$, the Frobenius norm of the approximation error of the estimated sparse coefficients with respect to $\Zopt$ is bounded by $\epsilon$ as defined in $(A_5)$. In particular, we have that 
$$\| \Z^1 - \Zopt \|_F \leq \| \E^{0}\|_F , $$
where $\| \E^{0} \|_{F} \leq \epsilon$.
\label{lem:z}
\end{lemma}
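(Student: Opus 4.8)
The plan is to reduce the claim to a column-wise hard-thresholding estimate and then sum over columns. Writing $\bW^0 = \Wopt + \E^0$ and fixing a column index $j$, the sparse coding step of Algorithm~\ref{alg:alternate} produces $\col{\Z}^1 = H_s(\bW^0 \bP_{(.,j)}) = H_s\!\left(\col{\Z}^* + \E^0 \bP_{(.,j)}\right)$, where I used Assumption $(A_1)$ to write $\Wopt \bP_{(.,j)} = \col{\Z}^*$. Thus the per-column task is to control how far the best $s$-term threshold of the perturbed vector $\col{\Z}^* + \mathbf{e}_j$, with $\mathbf{e}_j := \E^0 \bP_{(.,j)}$, lies from the exactly $s$-sparse vector $\col{\Z}^*$ (Assumption $(A_2)$).

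First I would establish \emph{support recovery}, i.e.\ that $H_s$ selects exactly the support $S(\col{\Z}^*)$. The off-support entries of $\col{\Z}^* + \mathbf{e}_j$ have magnitude at most $\|\mathbf{e}_j\|_\infty$, while each on-support entry has magnitude at least $\beta(\col{\Z}^*) - \|\mathbf{e}_j\|_\infty$, where $\beta(\cdot)$ returns the smallest nonzero magnitude; so it suffices that $\beta(\col{\Z}^*) > 2\|\mathbf{e}_j\|_\infty$, which also rules out ties at the selection boundary. Bounding $\|\mathbf{e}_j\|_\infty \leq \|\E^0\|_2\,\|\bP_{(.,j)}\|_2 \leq \epsilon\,\|\col{\Z}^*\|_2$ (using $\|\col{\Z}^*\|_2 = \|\bP_{(.,j)}\|_2$ since $\Wopt$ is unitary, and $\|\E^0\|_2 \leq \|\E^0\|_F \leq \epsilon$), this reduces to $\epsilon < 0.5\,\beta\!\left(\col{\Z}^*/\|\col{\Z}^*\|_2\right)$, which is precisely the requirement $\epsilon < \epsilon_1$ furnished by Proposition~\ref{convradius} under Assumption $(A_5)$. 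I expect this to be the main obstacle, since it is where the convergence-radius hypothesis enters and where the near-tie behavior of thresholding must be handled; the remaining steps are mechanical.

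Given exact support recovery, on $S := S(\col{\Z}^*)$ the thresholded vector equals $(\col{\Z}^* + \mathbf{e}_j)$ restricted to $S$ and vanishes off $S$; since $\col{\Z}^*$ is itself supported on $S$, the per-column error is exactly $\mathbf{e}_j$ restricted to $S$, whence $\|\col{\Z}^1 - \col{\Z}^*\|_2 \leq \|\mathbf{e}_j\|_2 = \|\E^0 \bP_{(.,j)}\|_2$. Summing over $j$ gives $\|\Z^1 - \Zopt\|_F^2 \leq \sum_j \|\E^0 \bP_{(.,j)}\|_2^2 = \|\E^0 \bP\|_F^2$, and applying submultiplicativity $\|\E^0 \bP\|_F \leq \|\E^0\|_F\,\|\bP\|_2$ together with the normalization $\|\bP\|_2 = 1$ from Assumption $(A_1)$ yields $\|\Z^1 - \Zopt\|_F \leq \|\E^0\|_F \leq \epsilon$, as claimed. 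The argument applies verbatim at any iteration $t$ upon replacing $\bW^0, \E^0$ by $\bW^{t-1}, \E^{t-1}$, which is what supports the recursive bounds used later; and dropping the normalization simply replaces the final factor by $\|\bP\|_2$, matching Remark~\ref{rm11}.
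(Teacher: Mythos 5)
Your proposal is correct and follows essentially the same route as the paper's proof: establish support recovery via the $\epsilon < 0.5\min_j \beta\!\left(\col{\Z}^*/\|\col{\Z}^*\|_2\right)$ threshold condition, observe that the per-column error is then the perturbation $\E^0\col{\bP}$ restricted to the support, and conclude with $\|\E^0\bP\|_F \leq \|\E^0\|_F\|\bP\|_2 = \|\E^0\|_F$. The paper phrases the restriction step with the diagonal projection matrices $\mathbf{\Gamma}_j^1$, but the content is identical.
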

\begin{proof}
For each column indexed by $j = 1, ..., N$, of the sparse coefficients matrix $\Z^{1}$, the following hold:
\begin{align}
\col{\Z}^1 \stackrel{}{=} H_s( & \bW^{0}  \col{\bP}) \stackrel{(Eq. \ref{eq:deltaw})}{=} H_s(\Wopt \col{\bP} + \E^{0} \col{\bP}) \nonumber  \\
&\stackrel{(A_1)}{=} H_s(\col{\Z}^* + \E^{0} \col{\bP}) \nonumber \\ 
&\stackrel{(A_5)}{=} \col{\Z}^* + \mathbf{\Gamma}_j^1 \E^{0} \col{\bP}, \label{eq:zi}
\end{align}
where $\mathbf{\Gamma}_j^1$ is a diagonal matrix with a one in the $(i,i)$th entry if $i \in S(\col{\Z}^1)$ and zero otherwise and $\E^{0}$ is as defined in \eqref{eq:deltaw}.
The last equality above follows from the fact that the support of $\col{\Z}^1$ includes that of $\col{\Z}^*$,
for small enough $\epsilon$ (assumption $(A_5)$). In particular, since $\begin{Vmatrix}
\col{\bP}
\end{Vmatrix}_{2} = \begin{Vmatrix}
\col{\Z}^*
\end{Vmatrix}_{2}$, we have 
\begin{align*}
\begin{Vmatrix}
\E^{0} \col{\bP}
\end{Vmatrix}_{\infty}\leq \begin{Vmatrix}
\E^{0} \col{\bP}
\end{Vmatrix}_{2} \leq \begin{Vmatrix}
\E^{0}
\end{Vmatrix}_{F} \begin{Vmatrix}
\col{\Z}^*
\end{Vmatrix}_{2}.
\end{align*}
Therefore, whenever $\| \E^0 \|_F \leq \epsilon < \frac{1}{2} \min_j \beta \left( \frac{\col{\Z}^*}{\|\col{\Z}^*\|_{2}} \right)$ with $\beta(\cdot)$ being the smallest nonzero magnitude vector entry, the support of $\col{\Z}^1$ includes\footnote{In this case, the support of $\col{\Z}^1$ in fact coincides with that of $\col{\Z}^*$. If we relaxed Assumption $(A_2)$ from $\| \col{\Z}^* \|_0 = s$ to $\| \col{\Z}^* \|_0 \leq s$ $\forall$ $j$, then $S\left ( \col{\Z}^* \right )\subseteq S\left ( \col{\Z}^1 \right )$ holds, and the lemma still holds.}
that of $\col{\Z}^*$ (the entries of the perturbation $\E^{0} \col{\bP}$ are not large enough to change the support).
The following results then hold:
\begin{align*}
\|\Z^1 - \Zopt \|_F^{2} &\stackrel{(Eq. \ref{eq:zi})}{=} \|[\mathbf{\Gamma}_1^1 \E^{0} \bP_{(\cdot,1)} , ... , \mathbf{\Gamma}_N^1 \E^{0} \bP_{(\cdot,N)}] \|^2_F \\
& \stackrel{(i)}{\leq} \| \E^{0} \bP\|_F^{2} \stackrel{(ii)}{\leq} \| \E^{0}\|_F^2 \| \bP \|^2_2 \stackrel{(A_1)}{=} \| \E^{0}\|_F^2.
\end{align*}
Here, $(i)$ follows by definition of $\mathbf{\Gamma}_j^1$; step $(ii)$ holds for the Frobenius norm of a matrix-matrix product; and the last equality holds because $\| \bP\|_2 = 1$ (Assumption $(A_1)$).
By Assumption $(A_5)$, $\| \E^0 \|^2_F \leq \epsilon $, which completes the proof.
\end{proof}

\begin{lemma}{\textbf (Approximation error for $\bW$)} For $t=1$ in Algorithm~\ref{alg:alternate} and under Assumptions $(A_1) - (A_5)$, the Frobenius norm of the approximation error of the estimated transform with respect to $\Wopt$ is bounded as
$$\| \bW^1 - \Wopt \|_F \leq q \epsilon, $$
\label{lem:w}
where $q$ is a scalar coefficient as in Theorem~\ref{thm:main1}.
\end{lemma}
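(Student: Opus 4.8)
The plan is to exploit the fact that the operator update is an orthogonal Procrustes (polar) problem and reduce it to a perturbation of the identity. Recall that $\bW^1 = \V^1{\U^1}^T$ is the orthogonal polar factor of $\Z^1\bP^T$. First I would record two consequences of the assumptions: since $\Wopt\bP = \Zopt$ with $\Wopt$ unitary and $\Zopt\Zopt^T = \Id$ (Assumption $(A_4)$), it follows that $\bP\bP^T = \Id$, and hence $\Zopt\bP^T = \Wopt$. Writing $\Z^1 = \Zopt + \mathbf{\Delta}^1$ as in \eqref{eq:deltaz}, this gives $\Z^1\bP^T = \Wopt + \mathbf{\Delta}^1\bP^T = \Wopt(\Id + \mathbf{F})$, where $\mathbf{F} \doteq \Wopt^T\mathbf{\Delta}^1\bP^T = \Wopt^T\mathbf{\Delta}^1\Zopt^T\Wopt$ (using $\bP^T = \Zopt^T\Wopt$). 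Since the polar factor of $\Wopt(\Id+\mathbf{F})$ is $\Wopt$ times the polar factor $\Q$ of $\Id+\mathbf{F}$, I obtain $\bW^1 = \Wopt\Q$, and therefore $\|\bW^1 - \Wopt\|_F = \|\Q - \Id\|_F$ by unitary invariance of the Frobenius norm.

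Next I would expand the polar factor of $\Id + \mathbf{F}$ to first order in $\mathbf{F}$. The orthogonal polar factor of a matrix close to the identity is, up to first order, the identity plus the skew-symmetric part of the perturbation, i.e. $\Q \doteq \Id + \tfrac12(\mathbf{F} - \mathbf{F}^T)$, with the symmetric part absorbed into the positive-definite factor. Hence $\|\bW^1 - \Wopt\|_F \doteq \tfrac12\|\mathbf{F} - \mathbf{F}^T\|_F = \tfrac12\|\mathbf{\Delta}^1\Zopt^T - \Zopt(\mathbf{\Delta}^1)^T\|_F$, again stripping the outer $\Wopt$ factors by unitary invariance. This is the step that produces the ``$\doteq$'' (first-order) equality in the statement, and where the neglected higher-order terms must ultimately be controlled.

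I would then invoke the support structure established in Lemma~\ref{lem:z}: for small $\epsilon$ the support of each column of $\Z^1$ coincides with that of $\Zopt$, so the $k$th row of $\mathbf{\Delta}^1$ is supported on $S(\Zopt_{(k,\cdot)})$, i.e. $\mathbf{\Delta}^1_{(k,\cdot)}\tilde{\D}_k = \mathbf{\Delta}^1_{(k,\cdot)}$. The matrix $\A := \mathbf{\Delta}^1\Zopt^T - \Zopt(\mathbf{\Delta}^1)^T$ is antisymmetric with $(k,\ell)$ entry $\mathbf{\Delta}^1_{(k,\cdot)}(\Zopt_{(\ell,\cdot)})^T - \Zopt_{(k,\cdot)}(\mathbf{\Delta}^1_{(\ell,\cdot)})^T$. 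Applying $(a-b)^2\le 2(a^2+b^2)$ entrywise and relabeling $k\leftrightarrow\ell$ gives $\|\A\|_F^2 \le 4\sum_{k\neq\ell}(\mathbf{\Delta}^1_{(k,\cdot)}(\Zopt_{(\ell,\cdot)})^T)^2$. For fixed $k$, inserting $\tilde{\D}_k$ and recognizing $\Zopt_{(\ell,\cdot)}\tilde{\D}_k$ as the $\ell$th row of $\D_k\Zopt\tilde{\D}_k$, the inner sum over $\ell$ equals $\|\mathbf{\Delta}^1_{(k,\cdot)}(\D_k\Zopt\tilde{\D}_k)^T\|_2^2 \le \|\mathbf{\Delta}^1_{(k,\cdot)}\|_2^2\,\|\D_k\Zopt\tilde{\D}_k\|_2^2 \le q^2\|\mathbf{\Delta}^1_{(k,\cdot)}\|_2^2$. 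Summing over $k$ yields $\|\A\|_F \le 2q\|\mathbf{\Delta}^1\|_F$, so the factor $\tfrac12$ from the polar expansion cancels the factor $2$ here and $\|\bW^1 - \Wopt\|_F \doteq \tfrac12\|\A\|_F \le q\|\mathbf{\Delta}^1\|_F \le q\epsilon$, where the last inequality uses Lemma~\ref{lem:z}.

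The hard part will be the first-order polar-factor expansion: only the dominant term $\tfrac12(\mathbf{F}-\mathbf{F}^T)$ survives, and making this rigorous requires showing that the Taylor/Neumann series for the polar factor of $\Id + \mathbf{F}$ converges and that its remainder is negligible — precisely the role of the convergence-radius bound $\epsilon_2$ (with the constant $C(\epsilon_0)$) in Proposition~\ref{convradius}. The second delicate ingredient is the support-recovery claim, valid only once $\epsilon$ is small enough (Assumption $(A_5)$, as used in Lemma~\ref{lem:z}); without it the $\tilde{\D}_k$ substitution fails and $q$ must be replaced by the larger $q(t)$ discussed after Remark~\ref{rm12}. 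The remaining algebra is routine.
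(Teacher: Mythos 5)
Your proposal is correct and is essentially the paper's own argument: the paper's identity $\V_z^1{\U_z^1}^T=(\Zopt{\Z^1}^T)^{-1}(\Zopt{\Z^1}^T\Z^1\Zopt^T)^{1/2}$ is exactly the polar-factor formula you invoke, both routes reduce $\|\bW^1-\Wopt\|_F$ to the first-order skew-symmetric term $\tfrac12\|\mathbf{\Delta}^1\Zopt^T-\Zopt(\mathbf{\Delta}^1)^T\|_F$, and both then use the support structure to insert $\D_k\Zopt\tilde{\D}_k$ and extract the factor $q$. You also correctly identify the two points deferred elsewhere in the paper (control of the higher-order polar remainder via $C(\epsilon_0)$ in Proposition~\ref{convradius}, and support recovery from Lemma~\ref{lem:z}), so nothing essential is missing.
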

\begin{proof}
Denote the SVD of $\Zopt {\Z^1}^T$ as $\U_z^1 \bSigma_z^1 {\V_z^1}^T$. From Algorithm~\ref{alg:alternate}, we have
\begin{align*}
\bW^1 = \V^1{\U^1}^T, & \,\,\;\; \bP {\Z^1}^T = \U^1 \bSigma^1 {\V^1}^T.
\end{align*}
Using the SVD of $\Zopt {\Z^1}^T$, we rewrite the above equations as 
\begin{align} 
\bP {\Z^1}^T &\stackrel{(A_1)}{=} \Wopt^T \Zopt {\Z^1}^T = \underbrace{\Wopt^T \U_z^1}_{\U^1} \underbrace{\bSigma_z^1}_{\bSigma^1} \underbrace{{\V_z^1}^T}_{{\V^1}^T} \nonumber \\
\bW^1 & = \V_z^1{\U_z^1}^T \Wopt. \label{eq:w1}
\end{align}

Now the error between $\bW^1$ and $\Wopt$ satisfies
\begin{align}
\| \bW^1 - \Wopt \|^2_F &\stackrel{(Eq. \ref{eq:w1})}{=} \| \V_z^1{\U_z^1}^T \Wopt - \Wopt \|^2_F = \| (\V_z^1{\U_z^1}^T  - \Id) \Wopt \|^2_F = \| \V_z^1{\U_z^1}^T  - \Id\|^2_F, \label{eq:errorw1}
\end{align}
where the matrix $\V_z^1{\U_z^1}^T$ can be further rewritten as follows:
\begin{align}
\V_z^1{\U_z^1}^T &= \V_z^1 \begin{pmatrix}
\bSigma^{1}_z
\end{pmatrix}^{-1} {\U_z^1}^T\U_z^1 \bSigma^{1}_z {\U_z^1}^T \nonumber \\
& = \underbrace{(\Zopt {\Z^1}^T)^{-1}}_{(a)} \underbrace{(\Zopt {\Z^1}^T {\Z^1} \Zopt^T )^{\frac{1}{2}}}_{(b)}.
\label{eq:vu}
\end{align}
The above equality holds for all $\epsilon < 1$, which suffices to ensure $\Zopt {\Z^1}^T$ is invertible. Note that the matrix square root (i.e., the matrix $\mathbf{B}$ in the decomposition $\mathbf{B}^{2}=\mathbf{A}$) in (b) above is the \emph{positive definite square root}.

Using Taylor Series Expansions for the matrix inverse and positive-definite square root along with \eqref{eq:deltaz} and the assumption $\Zopt \Zopt^{T} = \Id$, we have that
\begin{align}
(a) &= (\Zopt {\Z^1}^T)^{-1}\stackrel{(Eq. \ref{eq:deltaz})}{=} (\Zopt(\Zopt + \mathbf{\Delta}^1)^{T})^{-1} = (\Zopt\Zopt^T + \Zopt {\mathbf{\Delta}^1}^T)^{-1} \nonumber \\  
&\stackrel{(A_4)}{=} (\Id + \Zopt {\mathbf{\Delta}^1}^T)^{-1} = \Id - \Zopt {\mathbf{\Delta}^1}^T + O ((\mathbf{\Delta}^1)^2) \nonumber \\
(b) &= (\Zopt {\Z^1}^T {\Z^1} \Zopt^T )^{\frac{1}{2}} \stackrel{(A_4)}{=} \Id + \frac{1}{2}(\Zopt {\mathbf{\Delta}^1}^T + \mathbf{\Delta}^1 \Zopt^T) + O((\mathbf{\Delta}^1)^2) \nonumber \\
\V_z^1{\U_z^1}^T &\stackrel{(Eq. \ref{eq:vu})}{=} (a)(b) \nonumber \\
&= \begin{pmatrix} \Id - \Zopt {\mathbf{\Delta}^1}^T + O ((\mathbf{\Delta}^1)^2) \end{pmatrix} \begin{pmatrix} \Id + \frac{1}{2}(\Zopt {\mathbf{\Delta}^1}^T + \mathbf{\Delta}^1 \Zopt^T) + O((\mathbf{\Delta}^1)^2) \end{pmatrix} \nonumber \\ 
&= \Id + \frac{1}{2}(\mathbf{\Delta}^1 \Zopt^T - \Zopt {\mathbf{\Delta}^1}^T ) + O((\mathbf{\Delta}^1)^2), \label{taylorser}
\end{align}
where $O((\mathbf{\Delta}^1)^2)$ denotes corresponding higher order series terms, and is bounded in norm by $C \begin{Vmatrix}
\mathbf{\Delta}^{1}
\end{Vmatrix}^{2}$ for some constant $C$. 

Substituting these expressions in \eqref{eq:errorw1}, the error between the first transform iterate $\bW^1$ and $\Wopt$ is bounded as
\begin{align}
\|\bW^1 - \Wopt \|_F &\stackrel{( Eq. \ref{eq:errorw1})}{=} \| \V_z^1{\U_z^1}^T  - \Id\|_F \approx \frac{1}{2} \| \mathbf{\Delta}^1 \Zopt^T - \Zopt {\mathbf{\Delta}^1}^T \|_F. \label{upb1}
\end{align}
The approximation error above is bounded in norm
by $C \epsilon^2$, which is negligible for small $\epsilon$. So we only bound the dominant term $0.5 \| \mathbf{\Delta}^1 \Zopt^T - \Zopt {\mathbf{\Delta}^1}^T \|_F$ on the right.
The matrix $\mathbf{\Delta}^1 \Zopt^T - \Zopt {\mathbf{\Delta}^1}^T$ clearly has a zero diagonal (skew-symmetric). Thus, we have the following inequalities:
\begin{align}
\|\bW^1 - \Wopt \|_F &  \approx \frac{1}{2} \| \mathbf{\Delta}^1 \Zopt^T - \Zopt {\mathbf{\Delta}^1}^T \|_F \leq  \sqrt{ \sum_{k=1}^{n} \|  \D_k \Zopt \tilde{\D}_k {\mathbf{\Delta}_{(k,\cdot)}^1}^T \|_2^2 } \nonumber \\
& \leq \sqrt{ \sum_{k=1}^{n} \|  \D_k \Zopt \tilde{\D}_k \|_2^2 \| \mathbf{\Delta}_{(k,\cdot)}^1 \|_2^2} \leq \max_k  \|  \D_k \Zopt \tilde{\D}_k\|_2 \sqrt{ \sum_{k=1}^{n} \| \mathbf{\Delta}_{(k,\cdot)}^1 \|_2^2  } \nonumber \\
& =  \max_k \|  \D_k \Zopt \tilde{\D}_k\|_2 \| \mathbf{\Delta}^1 \|_F \stackrel{Lem. \ref{lem:z}}{\leq} q \| \E^0 \|_F \nonumber \\
& = q  \| \bW^0 - \Wopt \|_F, \label{qfacd1}
\end{align}
where we more simply write
(ignoring higher order terms in \eqref{upb1})
$q:=  max_k \|  \D_k \Zopt \tilde{\D}_k\|_2$.
Since $\| \E^0 \|_F \leq \epsilon$ by Assumption $(A_5)$, we obtain the desired result.
\end{proof}

Thus, we have shown the results for the $t=1$ case. We complete the proof of Theorem~\ref{thm:main1} by observing that for each subsequent iteration $t = \tau+1$, the same steps as above can be repeated along with the induction hypothesis (IH) to show that 
\begin{align*}
\|\Z^{\tau+1} - \Zopt \|_F &= \| \mathbf{\Delta}^{\tau+1} \|_F \leq \| \E^{\tau}\|_F \\
\hskip0.35\textwidth & = \| \bW^{\tau} - \Wopt \|_F  \stackrel{(IH)}{\leq} q^\tau \epsilon\\
\| \bW^{\tau+1} - \Wopt \|_F &\leq q \| \Z^{\tau + 1} - \Zopt \|_F  \leq q(q^{\tau} \epsilon). \hskip0.35\textwidth \square 
\end{align*}

\subsubsection{Proof of Theorem~\ref{thm:main2}}\label{thmmain2proof}
Here, we present the distinctions in the proof of Theorem~\ref{thm:main2}.
When Assumption $(A_4)$ is dropped, Lemma~\ref{lem:z} and its proof remain unaffected. The change to Lemma~\ref{lem:w} and its proof are outlined next.

\begin{lemma}{\textbf (Removing Assumption $(A_4)$)} For $t=1$ in Algorithm~\ref{alg:alternate} and under Assumptions $(A_1) - (A_3)$ and $(A_5)$, the Frobenius norm of the approximation error of the estimated transform with respect to $\Wopt$ is bounded as
$$\| \bW^1 - \Wopt \|_F \leq q \epsilon, $$
where $q$ is a scalar coefficient as in Theorem~\ref{thm:main2}.
\label{lem:z_not_orth}
\end{lemma}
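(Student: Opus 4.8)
The plan is to mirror the proof of Lemma~\ref{lem:w} as closely as possible, since dropping Assumption $(A_4)$ leaves the entire reduction intact except at one point. First I would observe that neither Lemma~\ref{lem:z} nor the identity $\bW^1 = \V_z^1{\U_z^1}^T\Wopt$ in \eqref{eq:w1} uses $(A_4)$, so we still have $\|\bW^1 - \Wopt\|_F = \|\V_z^1{\U_z^1}^T - \Id\|_F$ with $\V_z^1{\U_z^1}^T = (a)(b)$, where $(a) = (\Zopt{\Z^1}^T)^{-1}$ and $(b) = (\Zopt{\Z^1}^T\Z^1\Zopt^T)^{1/2}$ as in \eqref{eq:vu}. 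The only step that genuinely breaks is the replacement $\Zopt\Zopt^T = \Id$: now both Taylor series must be expanded about the general Gram matrix $\G := \Zopt\Zopt^T$, which is symmetric positive definite but not the identity.

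Substituting $\Z^1 = \Zopt + \mathbf{\Delta}^1$ via \eqref{eq:deltaz}, the inverse factor becomes $(a) = (\G + \Zopt{\mathbf{\Delta}^1}^T)^{-1}$. Writing it as $(\Id + \G^{-1}\Zopt{\mathbf{\Delta}^1}^T)^{-1}\G^{-1}$, the series converges provided $\|\G^{-1}\Zopt{\mathbf{\Delta}^1}^T\|_2 < 1$. Since $\sigma_{\max}(\Zopt) = \|\bP\|_2 = 1$ by $(A_1)$ (as $\Wopt$ is unitary), we have $\|\G^{-1}\|_2 = \sigma_{\min}^{-2}(\Zopt) = \kappa^2(\Zopt)$, so convergence is guaranteed once $\|\mathbf{\Delta}^1\|_F < \kappa^{-2}(\Zopt)$, which is precisely the tighter radius flagged in Remark~\ref{rm1b}. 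This gives $(a) = \G^{-1} - \G^{-1}\Zopt{\mathbf{\Delta}^1}^T\G^{-1} + O((\mathbf{\Delta}^1)^2)$. For the square-root factor I would write $(b) = \G + \mathbf{S} + O((\mathbf{\Delta}^1)^2)$, where the symmetric first-order correction $\mathbf{S}$ solves the Sylvester equation $\G\mathbf{S} + \mathbf{S}\G = \G\mathbf{\Delta}^1\Zopt^T + \Zopt{\mathbf{\Delta}^1}^T\G$; the standard eigenvalue bound for such equations yields $\|\mathbf{S}\|_F \le \tfrac12\,\kappa^2(\Zopt)\,\|\G\mathbf{\Delta}^1\Zopt^T + \Zopt{\mathbf{\Delta}^1}^T\G\|_F$, a second source of a $\kappa^2$ factor.

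Multiplying the two expansions collapses the leading terms to the identity, $(a)(b) = \Id + \G^{-1}(\mathbf{S} - \Zopt{\mathbf{\Delta}^1}^T) + O((\mathbf{\Delta}^1)^2)$, and since $(a)(b)$ is orthogonal its first-order deviation from $\Id$ is skew-symmetric. I would then reuse verbatim the row-wise bookkeeping of \eqref{qfacd1}: the support-containment property established in Lemma~\ref{lem:z} (row $k$ of $\mathbf{\Delta}^1$ is supported inside row $k$ of $\Zopt$, so $\mathbf{\Delta}^1_{(k,\cdot)}\tilde{\D}_k = \mathbf{\Delta}^1_{(k,\cdot)}$) lets one control the skew-symmetric first-order term by $\max_{1\le k\le n}\|\D_k\Zopt\tilde{\D}_k\|_2\,\|\mathbf{\Delta}^1\|_F$. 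Carrying the two $\kappa^2$ factors through (one from $\|\G^{-1}\|_2$, one from the Sylvester bound on $\mathbf{S}$) then yields $\|\bW^1 - \Wopt\|_F \le \kappa^4(\Zopt)\,\max_{1\le k\le n}\|\D_k\Zopt\tilde{\D}_k\|_2\,\|\mathbf{\Delta}^1\|_F \le q\,\|\E^0\|_F \le q\epsilon$, invoking Lemma~\ref{lem:z} and $(A_5)$ in the last two steps. The induction over $t$ is then identical to the end of the proof of Theorem~\ref{thm:main1}.

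The main obstacle is the matrix square-root factor $(b)$: unlike the inverse, it admits no elementary Neumann series, so its first-order term must be extracted from the Sylvester equation above, and the delicate part is confirming that this correction still combines with the $(a)$-expansion into a genuinely skew-symmetric object whose norm is governed by $\max_k\|\D_k\Zopt\tilde{\D}_k\|_2$, rather than by a cruder $\|\mathbf{\Delta}^1\|_F$ bound that would destroy the $q<1$ structure. Tracking which manipulations each contribute a factor $\kappa^2(\Zopt)$, and checking that the square-root expansion converges on the same $\kappa^{-2}(\Zopt)$ radius, is exactly where the $\kappa^4(\Zopt)$ prefactor of Theorem~\ref{thm:main2} originates and is the only place the argument differs materially from Lemma~\ref{lem:w}.
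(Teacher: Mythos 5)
Your strategy is the same as the paper's: you keep the reduction $\|\bW^1-\Wopt\|_F=\|\V_z^1{\U_z^1}^T-\Id\|_F$ and the factorization \eqref{eq:vu}, expand the inverse factor as a Neumann series around $\G^{-1}$ with the convergence requirement $\epsilon<\kappa^{-2}(\Zopt)$, and extract the first-order correction to the matrix square root from the Sylvester equation $\G\mathbf{S}+\mathbf{S}\G=\G\boldsymbol{\Delta}^1\Zopt^{T}+\Zopt{\boldsymbol{\Delta}^1}^{T}\G$ --- which is precisely the paper's Kronecker-sum gradient $(\G\oplus\G)^{-1}$ written in matrix rather than vectorized form. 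Your product expansion $(a)(b)=\Id+\G^{-1}(\mathbf{S}-\Zopt{\boldsymbol{\Delta}^1}^{T})+O((\boldsymbol{\Delta}^1)^2)$ is also correct, as is your identification of where the two $\kappa^2(\Zopt)$ factors come from.

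The gap is in how you propose to assemble the final bound. Estimating $\|\mathbf{S}\|_F$ by the generic Sylvester eigenvalue bound and then controlling $\|\G^{-1}\mathbf{S}\|_F$ and $\|\G^{-1}\Zopt{\boldsymbol{\Delta}^1}^{T}\|_F$ separately yields only a quantity of order $\kappa^{4}(\Zopt)\|\boldsymbol{\Delta}^1\|_F$, \emph{without} the factor $\max_k\|\D_k\Zopt\tilde{\D}_k\|_2$: the row-support bookkeeping of \eqref{qfacd1} applies only to the antisymmetric combination $\boldsymbol{\Delta}^1\Zopt^{T}-\Zopt{\boldsymbol{\Delta}^1}^{T}$ (whose diagonal vanishes exactly), not to $\boldsymbol{\Delta}^1\Zopt^{T}$ or $\mathbf{S}$ individually, whose diagonals are $O(\|\boldsymbol{\Delta}^1\|_F)$ with no small prefactor. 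Knowing that $(a)(b)$ is orthogonal, hence that its first-order deviation is skew-symmetric, does not supply the needed representation either, since antisymmetrizing $\G^{-1}(\mathbf{S}-\Zopt{\boldsymbol{\Delta}^1}^{T})$ places $\G^{-1}$ on different sides of the two terms and does not reproduce $\boldsymbol{\Delta}^1\Zopt^{T}-\Zopt{\boldsymbol{\Delta}^1}^{T}$. You flag this as ``the delicate part,'' but it is the entire content of the lemma. The paper closes it by diagonalizing $\G=\Q\La\Q^{T}$, writing the first-order term in Kronecker form, and using the exact identity $(\La\oplus\La)^{-1}(\Id\otimes\La)-\Id=-(\La\oplus\La)^{-1}(\La\otimes\Id)$ to show that the whole first-order term equals a single operator $\mathbf{H}$ with $\|\mathbf{H}\|_2\le\kappa^{4}(\Zopt)/2$ applied to $\mathrm{Vec}(\Zopt{\boldsymbol{\Delta}^1}^{T}-\boldsymbol{\Delta}^1\Zopt^{T})$; only after that algebraic cancellation can the \eqref{qfacd1} argument be invoked. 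You would need to carry out this computation (or its Sylvester-equation equivalent in the eigenbasis of $\G$) before taking any norms.
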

\begin{proof}
The proof of Lemma~\ref{lem:z_not_orth} relies on the general Taylor Series Expansions for the matrix inverse and positive definite square root. In particular, \eqref{taylorser} uses these expansions under the assumption that $\Zopt \Zopt^T = \Id$. To establish a result without this assumption, we first use the general Taylor Series Expansions for matrix inverse and square root then rely on algebraic identities of the Kronecker sum and product to manipulate the error bound of $\| \bW^1 - \Wopt \|_F$. 

To that end, let $\mathbf{G} \triangleq \Zopt \Zopt^T$. First, we look at the series expansion of $(\Zopt {\Z^1}^T)^{-1}$, for which the following equalities hold: 
\begin{align*}
(\Zopt {\Z^1}^T)^{-1} &= (\Zopt \Zopt^T + \Zopt \textcolor{black}{{\boldsymbol{\Delta}^1}^T})^{-1} \\
&=  (\mathbf{G} ( \Id +\mathbf{G}^{-1} \Zopt {\boldsymbol{\Delta}^1}^T) )^{-1} =  ( \Id +\mathbf{G}^{-1} \Zopt {\boldsymbol{\Delta}^1}^T )^{-1} \mathbf{G}^{-1}\\
&=  ( \Id  - \mathbf{G}^{-1} \Zopt {\boldsymbol{\Delta}^1}^T + O((\boldsymbol{\Delta}^1)^2) ) \mathbf{G}^{-1}\\
&=  \mathbf{G}^{-1}  - \mathbf{G}^{-1} \Zopt {\boldsymbol{\Delta}^1}^T \mathbf{G}^{-1} + O((\boldsymbol{\Delta}^1)^2),
\end{align*}
where we factored out\footnote{Matrix $\mathbf{G}$ must be invertible for $\kappa^2(\mathbf{G}) = \kappa^4(\Zopt)$ to be finite and Assumption $(A_3)$ to hold.} $\mathbf{G}^{-1}$ and then computed the series expansion of a matrix inverse. The 
taylor series converges when $ \| \mathbf{G}^{-1} \Zopt \boldsymbol{\Delta}^{1^T}\|_2
\leq \begin{Vmatrix}
\mathbf{G}^{-1}
\end{Vmatrix}_{2} \begin{Vmatrix}
 \Zopt 
\end{Vmatrix}_{2} \begin{Vmatrix}
\boldsymbol{\Delta}^{1} 
\end{Vmatrix}_{F} \leq \kappa^2(\Zopt) \epsilon < 1$
or when $\epsilon < \kappa^{-2}(\Zopt)$.

For the series expansion of the matrix square root in \eqref{eq:vu}, we first observe that
\begin{align*}
(\Zopt {\Z^1}^T \Z^1 {\Zopt}^T)^\frac{1}{2} &= (\Zopt {(\Zopt + \boldsymbol{\Delta}^1)}^T (\Zopt + \boldsymbol{\Delta}^1) {\Zopt}^T)^\frac{1}{2} \\
 &= \left( \Zopt \Zopt^T \Zopt \Zopt^T + (\Zopt \Zopt^T \boldsymbol{\Delta}^1 \Zopt^T + \Zopt {\boldsymbol{\Delta}^1}^T \Zopt \Zopt^T + \Zopt {\boldsymbol{\Delta}^1}^T\boldsymbol{\Delta}^1 \Zopt^T) \right)^\frac{1}{2} \\
 &= \left( \mathbf{G}^2 + (\mathbf{G} \boldsymbol{\Delta}^1 \Zopt^T + \Zopt {\boldsymbol{\Delta}^1}^T \mathbf{G} + \Zopt {\boldsymbol{\Delta}^1}^T\boldsymbol{\Delta}^1 \Zopt^T) \right)^\frac{1}{2}.
\end{align*}
Let $F(\mathbf{G}) \triangleq \left( \mathbf{G}^2 + (\mathbf{G} \boldsymbol{\Delta}^1 \Zopt^T + \Zopt {\boldsymbol{\Delta}^1}^T \mathbf{G} + \Zopt {\boldsymbol{\Delta}^1}^T\boldsymbol{\Delta}^1 \Zopt^T) \right)^\frac{1}{2} = \left( \mathbf{G}^2 + \tilde{\Delta} \right)^\frac{1}{2} $, where $\tilde{\Delta}$ denotes the remainder of terms within the square root. The Taylor Series Expansion for $F(\mathbf{G})$ can be written as 
$F(\G) \approx \G + R^{T}(\nabla F(\mathbf{G}) \mathrm{Vec}(\tilde{\Delta}^{T})) + O(\tilde{\Delta}^2)$, where the operator $\mathrm{Vec}(\cdot)$ reshapes a matrix into a vector by stacking the columns, $R(\cdot)$ undoes or inverts the $\mathrm{Vec}(\cdot)$ operation by reshaping a vector into an $n \times n$ matrix, and the gradient of the square root function is obtained as follows, where $\otimes$ denotes the Kronecker product and $\oplus$ denotes the Kronecker sum:
\begin{equation}
\nabla F(\mathbf{G}) = \frac{\partial \mathrm{Vec}(F^{T}(\mathbf{G}))}{\partial \mathrm{Vec}^{T}(\mathbf{G}^{T})}=\left ( \Id\otimes \G + \G \otimes \Id \right )^{-1}= (\G \oplus \G)^{-1}. \label{sqrtgrad}
\end{equation}

Using the above expressions, \eqref{eq:vu} in this case becomes
\begin{align}
\V_z^1{\U_z^1}^T  & = (\Zopt {\Z^1}^T)^{-1} (\Zopt {\Z^1}^T {\Z^1} \Zopt^T )^{\frac{1}{2}} \nonumber\\
& = \Id - \G^{-1} \Zopt {\boldsymbol{\Delta}^1}^T + \G^{-1} R^{T}((\G \oplus \G)^{-1} \mathrm{Vec}(\tilde{\Delta}^{T})) + O((\boldsymbol{\Delta}^1)^2),\nonumber\\
& = \Id - \G^{-1} \Zopt {\boldsymbol{\Delta}^1}^T + \G^{-1} R^{T}((\G \oplus \G)^{-1} \mathrm{Vec}(\mathbf{G} \boldsymbol{\Delta}^1 \Zopt^T + \Zopt {\boldsymbol{\Delta}^1}^T \mathbf{G})) + O((\boldsymbol{\Delta}^1)^2),
\label{eq:vu2}
\end{align}
with $O((\boldsymbol{\Delta}^1)^2)$ denoting corresponding higher order series terms in each step above.

Now recall from \eqref{upb1} that $\|\bW^1 - \Wopt \|_F = \| \V_z^1{\U_z^1}^T  - \Id\|_F = \| \B \|_F$, where $\B \triangleq \V_z^1{\U_z^1}^T  - \Id$ $=- \G^{-1} \Zopt {\boldsymbol{\Delta}^1}^T + \G^{-1} R^{T}((\G \oplus \G)^{-1} \mathrm{Vec}(\mathbf{G} \boldsymbol{\Delta}^1 \Zopt^T + \Zopt {\boldsymbol{\Delta}^1}^T \mathbf{G})) + O((\boldsymbol{\Delta}^1)^2)$.
To bound the required error $\| \B \|_F$, first, using the property of the $\mathrm{Vec}(\cdot)$ operator that $\mathrm{Vec}(\mathbf{A}\mathbf{X}\mathbf{C})= (\mathbf{C}^{T} \otimes \mathbf{A}) \mathrm{Vec}(\mathbf{X})$, we can easily obtain a simplified expression for $\B$ ignoring the $O((\boldsymbol{\Delta}^1)^2)$ terms (since they are bounded in norm by $C \epsilon^2$, which is negligible for small $\epsilon$ and $C$ is a constant) in \eqref{eq:vu2} as follows:
\begin{align}
\mathrm{Vec}(\B^{T})  = & -(\G^{-1} \otimes \Id) \mathrm{Vec}(\boldsymbol{\Delta}^1\Zopt^{T})
+(\G^{-1} \otimes \Id) (\G \oplus \G)^{-1} (\Id \otimes \G) \mathrm{Vec}(\boldsymbol{\Delta}^1\Zopt^{T}) \nonumber \\
& +(\G^{-1} \otimes \Id) (\G \oplus \G)^{-1} (\G \otimes \Id) \mathrm{Vec}(\Zopt {\boldsymbol{\Delta}^1}^T). \label{vecb}
\end{align}
Denoting the SVD of (positive-definite) $\G$ as $\Q \La \Q^{T}$, it can be shown that the SVD of the Kronecker sum $\G \oplus \G$ is\footnote{The SVD of the Kronecker sum is established by the following equalities that use the definitions of the Kronecker sum and SVD of $\G$ and \eqref{kronprd}: $\G \oplus \G = \Id\otimes \G + \G \otimes \Id = \Q \, \Id \, \Q^{T} \otimes \Q \La \Q^{T} + \Q \La \Q^{T} \otimes \Q \, \Id \, \Q^{T} = (\Q \otimes \Q) (\Id \otimes \La)(\Q^{T} \otimes \Q^{T}) + (\Q \otimes \Q) (\La \otimes \Id)(\Q^{T} \otimes \Q^{T}) =(\Q \otimes \Q) (\La \oplus \La) (\Q^{T} \otimes \Q^{T})$.} $(\Q \otimes \Q) (\La \oplus \La) (\Q \otimes \Q)^{T}$, or that $(\G \oplus \G)^{-1} = (\Q \otimes \Q) (\La \oplus \La)^{-1} (\Q \otimes \Q)^{T}$.
Using these SVDs and the standard result that
\begin{equation}
(\mathbf{H}_{1} \otimes \mathbf{H}_{2})(\mathbf{H}_{3} \otimes \mathbf{H}_{4}) = (\mathbf{H}_{1} \mathbf{H}_{3} \otimes \mathbf{H}_{2} \mathbf{H}_{4}), \label{kronprd}
\end{equation}
the following results readily hold:
\begin{align}
(\G \oplus \G)^{-1} (\Id \otimes \G) & =  (\Q \otimes \Q) (\La \oplus \La)^{-1} (\Q^{T} \otimes \Q^{T}) (\Id \otimes \G)  \nonumber \\
& = (\Q \otimes \Q) (\La \oplus \La)^{-1} (\Q^{T} \otimes \Q^{T} \G)   \nonumber \\
& = (\Q \otimes \Q) (\La \oplus \La)^{-1} (\Q^{T} \otimes \La \Q^{T})   \nonumber \\
& = (\Q \otimes \Q) (\La \oplus \La)^{-1} (\Id \otimes \La) (\Q^{T} \otimes \Q^{T}), 
\label{Qres1}
\end{align}
\vspace{-0.25in}
\begin{align}
(\G \oplus \G)^{-1} (\G \otimes \Id) & =  (\Q \otimes \Q) (\La \oplus \La)^{-1} (\Q^{T}\G \otimes \Q^{T})   \nonumber \\
& = (\Q \otimes \Q) (\La \oplus \La)^{-1} (\La \Q^{T} \otimes \Q^{T}) \nonumber \\
&= (\Q \otimes \Q) (\La \oplus \La)^{-1} (\La \otimes \Id) (\Q^{T} \otimes \Q^{T}).  
\label{Qres2}
\end{align}

Substituting \eqref{Qres1} and \eqref{Qres2} in \eqref{vecb} simplifies \eqref{vecb} as follows:
\begin{multline}
\mathrm{Vec}(\B^{T})  = (\G^{-1} \otimes \Id)(\Q \otimes \Q)\Biggl( 
\begin{bmatrix}
(\La \oplus \La)^{-1}(\Id \otimes \La) - \Id
\end{bmatrix}(\Q^{T} \otimes \Q^{T})\mathrm{Vec}(\boldsymbol{\Delta}^1\Zopt^{T}) \\
 + (\La \oplus \La)^{-1} (\La \otimes \Id) (\Q^{T} \otimes \Q^{T}) \mathrm{Vec}(\Zopt {\boldsymbol{\Delta}^1}^T)  
  \Biggr).
\label{vecb2}
\end{multline}
Moreover, we have that
\begin{align} 
(\La \oplus \La)^{-1}(\Id \otimes \La) - \Id & = (\La \oplus \La)^{-1}\begin{pmatrix}
(\Id \otimes \La) - (\La \oplus \La)
\end{pmatrix} \nonumber \\
& =(\La \oplus \La)^{-1}\begin{pmatrix}
(\Id \otimes \La) - (\Id \otimes \La + \La \otimes \Id)
\end{pmatrix}
= - (\La \oplus \La)^{-1}(\La \otimes \Id).
\end{align}
Thus, equation \eqref{vecb2} further simplifies to
\begin{equation}
\mathrm{Vec}(\B^{T})  = \mathbf{H} \,
\mathrm{Vec}\left ( \Zopt {\boldsymbol{\Delta}^1}^T - \boldsymbol{\Delta}^1\Zopt^{T} \right ),
\label{vecb3}
\end{equation}
where the matrix $\mathbf{H}$ is defined as
\begin{equation}
\mathbf{H} \triangleq (\G^{-1} \otimes \Id)(\Q \otimes \Q)(\La \oplus \La)^{-1}(\La \otimes \Id)(\Q^{T} \otimes \Q^{T}).
\end{equation}

Finally, we use \eqref{vecb3} to obtain
\begin{equation}
\|\bW^1 - \Wopt \|_F = \| \V_z^1{\U_z^1}^T  - \Id\|_F \approx \| \mathbf{H} \,
\mathrm{Vec}\left ( \Zopt {\boldsymbol{\Delta}^1}^T - \boldsymbol{\Delta}^1\Zopt^{T} \right ) \|_2 \leq \| \mathbf{H} \|_2 \| \boldsymbol{\Delta}^1\Zopt^{T} - \Zopt {\boldsymbol{\Delta}^1}^T  \|_{F}. \label{vecw4}
\end{equation}
Here, the submultiplicativity of the spectral norm and the fact that $\left \| \mathbf{H}_{1} \otimes \mathbf{H}_{2} \right \|_{2} = \left \| \mathbf{H}_{1} \right \|_{2} \left \| \mathbf{H}_{2} \right \|_{2}$ ensures that
\begin{align}
\| \mathbf{H} \|_2 & \leq \begin{Vmatrix}\G^{-1} \otimes \Id\end{Vmatrix}_{2}
\begin{Vmatrix}\Q \otimes \Q\end{Vmatrix}_{2}
\begin{Vmatrix}(\La \oplus \La)^{-1}\end{Vmatrix}_{2}
\begin{Vmatrix}\La \otimes \Id \end{Vmatrix}_{2}
\begin{Vmatrix}\Q^{T} \otimes \Q^{T}\end{Vmatrix}_{2} \nonumber \\
& = \begin{Vmatrix}\G^{-1}\end{Vmatrix}_{2}\begin{Vmatrix}\Q \end{Vmatrix}_{2}^{2}
\begin{Vmatrix}(\La \oplus \La)^{-1}\end{Vmatrix}_{2}\begin{Vmatrix}\La \end{Vmatrix}_{2}
\begin{Vmatrix}\Q^{T}\end{Vmatrix}_{2}^{2} = \frac{\kappa^{4}(\Zopt)}{2}, \label{hineq1}
\end{align}
where the last equality follows from the facts that $\begin{Vmatrix}\Q \end{Vmatrix}_{2}=1$ (for unitary matrix); $\begin{Vmatrix}\La \end{Vmatrix}_{2} = \begin{Vmatrix}\G \end{Vmatrix}_{2} = \begin{Vmatrix}\Zopt \end{Vmatrix}_{2}^{2} = \begin{Vmatrix}\mathbf{P} \end{Vmatrix}_{2}^{2} = 1$ (by Assumption ($A_1$)); $\begin{Vmatrix}(\La \oplus \La)^{-1}\end{Vmatrix}_{2} = 0.5 \begin{Vmatrix}\G^{-1}\end{Vmatrix}_{2} = 0.5\sigma_{n}^{-1}(\G)= 0.5\sigma_{n}^{-2}(\Zopt)$, where $\sigma_{n} (\cdot)$ denotes the smallest matrix singular value; and the fact that $\kappa(\Zopt) = \sigma_{1}(\Zopt)/\sigma_{n}(\Zopt)= \sigma_{n}^{-1}(\Zopt)
$ (using Assumption ($A_1$)).
Substituting \eqref{hineq1} in \eqref{vecw4} and using a similar set of inequalities as in \eqref{qfacd1} to bound the $\| \boldsymbol{\Delta}^1\Zopt^{T} - \Zopt {\boldsymbol{\Delta}^1}^T  \|_{F}$ term in \eqref{vecw4} provides the following bound:
\begin{equation}
\|\bW^1 - \Wopt \|_F \approx \| \mathbf{H} \,
\mathrm{Vec}\left ( \Zopt {\boldsymbol{\Delta}^1}^T - \boldsymbol{\Delta}^1\Zopt^{T} \right ) \|_2 \leq  q  \| \bW^0 - \Wopt \|_F,
 \label{qfacd2}
\end{equation}
where we more simply write $q \, := \, \kappa^{4}(\Zopt) \, \textcolor{black}{\max_k} \|  \D_k \Zopt \tilde{\D}_k\|_2$.
Since by Assumption $(A_5)$, $\| \E^0 \|_F \leq \epsilon$, we obtain the desired result.
\end{proof}

\subsubsection{Proof of Corollary~\ref{coro1}}
We have $q = \max_k \|  \D_k \Zopt \tilde{\D}_k\|_2$ (focusing on the dominant component) with~$\Zopt \Zopt^{T} = \Id$ by assumptions ($A_3$) and ($A_4$), respectively. For brevity in notation, let $\M_k = \D_k \Zopt \tilde{\D}_k$. Here the matrix $\D_k$ zeros out the $k$th row of $\Zopt$ and $\tilde{\D}_k$ zeros out the columns corresponding to the complement of the support of the $k$th row of $\Zopt$. The matrix $\M_k \M_k^T$ is then a diagonal matrix where the $(k,k)^{\mathrm{th}}$ entry is $0$ and the $(i,i)^{\mathrm{th}}$ entry for $i \neq k$ is $\| \row{\Zopt} |_{S(
\Zopt_{(i,.)}) \cap S(
\Zopt_{(k,.)})} \|^2_2$,
where $\row{\Zopt} |_{S(
\Zopt_{(i,.)}) \cap S(
\Zopt_{(k,.)})}$ coincides with $\row{\Zopt}$ on $S(
\Zopt_{(i,.)}) \cap S(
\Zopt_{(k,.)})$ and is zero outside this support.
Clearly, the $k$th row and column of $\M_k \M_k^T$ are zero and its other off-diagonal entries are
$\langle \Zopt_{(i,.)}|_{S(
\Zopt_{(i,.)}) \cap S(
\Zopt_{(k,.)})}, \rowj{\Zopt}|_{S(
\Zopt_{(j,.)}) \cap S(
\Zopt_{(k,.)})}\rangle$ $= 0$ because each column of $\Zopt$ has at most $s=2$ non-zeros and $S(
\Zopt_{(i,.)}) \cap S(
\Zopt_{(j,.)}) \cap S(
\Zopt_{(k,.)}) = \emptyset$ for $i \neq j \neq k$. So, we readily have that 
\begin{align*}
q^{2} &= \max_k \|  \M_k \M_k^T \|_2 \\
& = \max_{1\leq k \leq n} \, \max_{i \neq k} \| \row{\Zopt} |_{S(
\Zopt_{(i,.)}) \cap S(
\Zopt_{(k,.)})} \|^2_2 \\
& < 1,
\end{align*}
where the last inequality bound follows from the fact that $\| \row{\Zopt} |_{S(
\Zopt_{(i,.)}) \cap S(
\Zopt_{(k,.)})} \|^2_2 < 1$ for all $i \neq k$, which holds because each row of $\Zopt$ has unit $\ell_{2}$ norm (assumption $(A_4)$) and no two rows have the exact same support. \hfill $\square$

\subsubsection{Proof of Proposition~\ref{conj}} \label{secproof2}
Under the conditions stated in Proposition~\ref{conj}, the (dominant) $q$ factor is expected to be less than $1$ given sufficient training signals, i.e., large $N$.
For the proof, we study the asymptotic behavior of the matrices $\mathbf{H}\triangleq \Zopt \Zopt^{T}$ and 
$\mathbf{G} \triangleq \M_{k}  \M_{k}^{T}$, where $\M_{k} \triangleq \D_k \Zopt \tilde{\D}_k$, which appear in $q_{N} = \begin{pmatrix}
\kappa^{4}\begin{pmatrix}
\Zopt
\end{pmatrix}/\| \bP \|_{2}
\end{pmatrix}\max_{1\leq k \leq n} \| \D_k \Zopt \tilde{\D}_k \|_2$ as defined in Remark~\ref{rm11}. First, we show that $\begin{pmatrix}
\kappa^{4}\begin{pmatrix}
\Zopt
\end{pmatrix}/\| \bP \|_{2}
\end{pmatrix} \rightarrow 1$ almost surely as $N \rightarrow \infty$ using $\mathbf{H}$. Then, we will show that $\| \D_k \Zopt \tilde{\D}_k \|_2$ $\rightarrow \sqrt{s-1/n-1}$ almost surely as $N \rightarrow \infty$ using $\mathbf{G}$.

Let $\tilde{\Z}^{*} = \sqrt{N}\Zopt$. Then the nonzero entries of $\tilde{\Z}^{*}$ have zero mean and variance of $n/s$.
Let $\mathds{1}_{\left \{ j \in S(
\Zopt_{(.,l)}) \right \}}$ denote the indicator function that takes the value $1$ when $j \in S(\Zopt_{(.,l)})$ and is zero otherwise.
Since $\Zopt \Zopt^{T} = N^{-1} \tilde{\Z}^{*} \tilde{\Z}^{*^{T}}$, using the law of large numbers, the diagonal entries of $\mathbf{H}$ converge almost surely as follows:
\begin{equation}
\lim_{N \to \infty} \mathbf{H}_{(j,j)} = \lim_{N \to \infty}\frac{1}{N}\sum_{l=1}^{N}\tilde{\Z}^{*^{2}}_{(j,l)} \mathds{1}_{\left \{ j \in S(
\Zopt_{(.,l)}) \right \}} = E\left [ \tilde{\Z}^{*^{2}}_{(j,l)} \mathds{1}_{\left \{ j \in S(
\Zopt_{(.,l)}) \right \}} \right ]=1,
\label{hlim1}
\end{equation}
where $b \triangleq \tilde{\Z}^{*^{2}}_{(j,l)} \mathds{1}_{\left \{ j \in S(
\Zopt_{(.,l)}) \right \}}$ is i.i.d. over the columns $l$. The random variable $b$ is nonzero (the nonzero part has mean $n/s$) with probability (w.p.)\footnote{The probability that $j \in S\begin{pmatrix}
\Zopt_{(.,l)}
\end{pmatrix}$ is $\frac{\binom{n-1}{s-1}}{\binom{n}{s}}=\frac{s}{n}$.}
$s/n$ and is zero w.p. $1 -(s/n)$, implying $E\left [ b \right ] = 1$.
Similarly, the off-diagonal entries $\mathbf{H}_{(i,j)}$ for $i \neq j$ converge 
as follows:
\begin{equation}
\lim_{N \to \infty} \mathbf{H}_{(i,j)} = \lim_{N \to \infty}\frac{1}{N}\sum_{l=1}^{N}\tilde{\Z}^{*}_{(i,l)}\tilde{\Z}^{*}_{(j,l)} \mathds{1}_{\left \{ i, j \in S(
\Zopt_{(.,l)}) \right \}} = E\left [ \tilde{\Z}^{*}_{(i,l)}\tilde{\Z}^{*}_{(j,l)} \mathds{1}_{\left \{ i, j \in S(
\Zopt_{(.,l)}) \right \}} \right ]=0,
\label{hlim2}
\end{equation}
where $h \triangleq \tilde{\Z}^{*}_{(i,l)}\tilde{\Z}^{*}_{(j,l)} \mathds{1}_{\left \{ i, j \in S(
\Zopt_{(.,l)}) \right \}}$ is nonzero w.p.\footnote{This is the probability that the two indexes $i$ and $j$ both appear in the support of the $l$th column of $\Zopt$. Thus, $r= \frac{\binom{n-2}{s-2}}{\binom{n}{s}} = \frac{s(s-1)}{n(n-1)}$.} $r = s(s-1)/n(n-1)$ and zero w.p. $1 - r$, implying $E\left [ h \right ] = \begin{pmatrix}
s(s-1)/n(n-1)
\end{pmatrix} E[a] = 0$,
where $a$ is the product of two i.i.d. zero mean random variables.
Therefore, from \eqref{hlim1} and \eqref{hlim2}, it follows that $\mathbf{H} = \Zopt \Zopt^{T}$ converges to $\Id$ almost surely.
Thus, as $N \to \infty$, $\kappa^{4}\begin{pmatrix}
\Zopt
\end{pmatrix}/\| \bP \|_{2} =
\kappa^{2}\begin{pmatrix}
\mathbf{H}
\end{pmatrix}/\sqrt{\| \mathbf{H} \|_{2}}$ in the definition of $q_{N}$, converges to $1$ almost surely.

Now consider $\mathbf{G}$ and note that the $k$th row and column of the matrix $\mathbf{G}$ are zero. As $N \to \infty$, the diagonal entries of $\mathbf{G}$ have the following limit almost surely:
\begin{align}
\lim_{N \to \infty} \mathbf{G}_{(j,j)} &= \lim_{N \to \infty}\frac{1}{N}\sum_{l=1}^{N}\tilde{\Z}^{*^{2}}_{(j,l)} \mathds{1}_{\left \{ l \in S(
\Zopt_{(j,.)}) \cap S(
\Zopt_{(k,.)}) \right \}} \nonumber \\
&= E\left [ \tilde{\Z}^{*^{2}}_{(j,l)} \mathds{1}_{\left \{ l \in S(
\Zopt_{(j,.)}) \cap S(
\Zopt_{(k,.)}) \right \}} \right ] \nonumber \\
&= \frac{s(s-1)}{n(n-1)} \times \frac{n}{s} = \frac{s-1}{n-1},
\label{glim1}
\end{align}
which holds for all $j \neq k$. The expectation follows from the fact that $\tilde{\Z}^{*^{2}}_{(j,l)} \mathds{1}_{\left \{ l \in S(
\Zopt_{(j,.)}) \cap S(
\Zopt_{(k,.)}) \right \}}$ is i.i.d. over the columns\footnote{Note that $\mathds{1}_{\left \{ l \in S(
\Zopt_{(j,.)}) \cap S(
\Zopt_{(k,.)}) \right \}} = \mathds{1}_{\left \{ j, k \in S(
\Zopt_{(.,l)}) \right \}}$.} $l$, is nonzero
(mean $n/s$ for nonzero part)
w.p. $s(s-1)/n(n-1)$, and is zero otherwise.
The following limit holds almost surely for the off-diagonal entries of $\mathbf{G}$:
\begin{align}
\lim_{N \to \infty} \mathbf{G}_{(i,j)} &= \lim_{N \to \infty}\frac{1}{N}\sum_{l=1}^{N}\tilde{\Z}^{*}_{(i,l)} \tilde{\Z}^{*}_{(j,l)} \mathds{1}_{\left \{ l \in S(
\Zopt_{(i,.)}) \cap S(
\Zopt_{(j,.)}) \cap S(
\Zopt_{(k,.)}) \right \}}\nonumber \\
&= E\left [ \tilde{\Z}^{*}_{(i,l)} \tilde{\Z}^{*}_{(j,l)} \mathds{1}_{\left \{ l \in S(
\Zopt_{(i,.)}) \cap S(
\Zopt_{(j,.)}) \cap S(
\Zopt_{(k,.)}) \right \}} \right ]= 0,
\label{glim2}
\end{align}
which follows because the indexes $i$, $j$, and $k$ all lie in the support of the $l$th column (to get non-zero indicator function) w.p. $\frac{s(s-1)(s-2)}{n(n-1)(n-2)}$, and the expectation of the product of zero mean i.i.d. random variables is zero.
It is obvious from \eqref{glim1} and \eqref{glim2} that
\begin{equation}
\lim_{N\to \infty} \M_{k} \M_{k}^{T} = \frac{s-1}{n-1} \D_k \;\; \textrm{a.s.}
\label{glim}
\end{equation}
Thus, as $N \to \infty$, $\left \| \D_k \Zopt \tilde{\D}_k \right \|_{2}$ $= \left \| \M_k \right \|_{2}$ $ \to \sqrt{s-1/n-1}$ almost surely, and the same is true for $\max_{1\leq k \leq n} \| \M_k \|_2$.
Combining all the above results, the required result \eqref{qlim1} is readily established. \hfill $\square$

Note that under the assumed probabilistic model of $\Zopt$, the matrix $\M_{k}\M_{k}^{T}$ in the proof of Proposition~\ref{conj} above approaches a diagonal matrix as $N \to \infty$, whereas in the proof of Corollary~\ref{coro1} for the $s=2$ case, it is deterministically a diagonal matrix for each $N$.

\section{Experiments} \label{sec3}
In this section, we provide numerical results supporting our findings. We also discuss the empirical behavior of the algorithm with respect to different initializations.

\subsection{Empirical Performance of Algorithm}
In the first two experiments, we generated the training set $\bP$ using randomly generated $\Wopt$ and $\Zopt$, and set $n = 50$, $N = 10000$, and $s=\{ 5, \, 10 \}$. The transform $\Wopt$ is generated in each case by applying Matlab's \texttt{orth()} function on a standard Gaussian matrix. For generating $\Zopt$, the support of each column is chosen uniformly at random and the nonzero entries are drawn i.i.d. from a Gaussian distribution with mean zero and variance $n/sN$. Section~\ref{sec2a} (Theorems~\ref{thm:main1} and \ref{thm:main2}) established model recovery guarantees for Algorithm~\ref{alg:alternate}. Figure~\ref{fig:errorW} shows the empirical evolution of the Frobenius norm of the approximation error of the transform iterates with respect to $\Wopt$, for an $\epsilon$ initialization ($\epsilon=0.49 \min_j \beta \left( \col{\Z}^*/\|\col{\Z}^*\|_{2} \right)$ -- see \eqref{eq:zi}). The plots illustrate the observed linear convergence of the iterates to the underlying true
operator $\Wopt$.

\begin{figure}[!h]
\centering
\includegraphics[width=.45\textwidth]{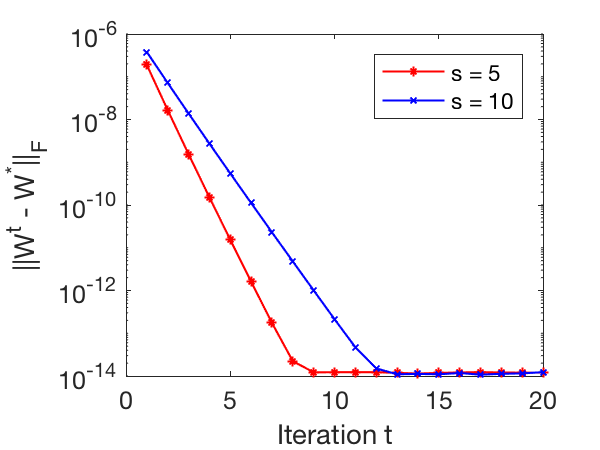}
\caption{The performance of Algorithm \ref{alg:alternate} for recovering $\Wopt$ for $s=5$ and $s=10$.}
\label{fig:errorW}
\end{figure}

Fig.~\ref{fig:init5} and Fig.~\ref{fig:init10} show the behavior of Algorithm~\ref{alg:alternate} with different initializations. We consider six different initializations and plot the evolution of the objective function over iterations. The first initialization, labeled `eps', denotes an initialization as in Fig.~\ref{fig:errorW} with $\epsilon=0.49 \min_j \beta \left( \col{\Z}^*/ \|\col{\Z}^*\|_2\right)$. The other initializations are as follows: entries of $\bW^0$ drawn i.i.d. from a standard Gaussian distribution (labeled `rand'); an $n \times n$ identity matrix $\bW^0$ labeled `id'; a discrete cosine transform (DCT) initialization labeled `dct'; entries of $\bW^0$ drawn i.i.d. from a uniform distribution ranging from 0 to 1 (labeled `unif'); and $\bW^0 = \textbf{0}^{n \times n}$ labeled `zero'. Note that the minimum objective value in \eqref{eq:opt_program} is $0$. For non-epsilon initializations, we see that the behavior of Algorithm~\ref{alg:alternate} is split into two phases. In the first phase, the iterates slowly decrease the objective. When the iterates are close enough to a solution, the second phase occurs and during this phase, Algorithm~\ref{alg:alternate} enjoys rapid convergence (towards 0). For different initializations, the algorithm converged to a scaled (by a diagonal $\pm 1$ matrix), row permuted version of the predetermined $\Wopt$. Fig.~\ref{fig:init5} and Fig.~\ref{fig:init10} also show the proportion of recovered (entry-wise) support of $\Zopt$ (up to row-permutation and sign changes). The grey region highlights the range of iterations in which the true support of $\Zopt$ is estimated well by the different initializations (i.e., where the proportion of recovered support reaches near 1 or 100\%). These empirical results show that the aforementioned second phase of the convergence behavior occurs in the iterations proceeding the point when the algorithm acquires the true support of $\Zopt$. Furthermore, note that the objective's convergence rate in the second phase is similar to that of the `eps' case, where $\epsilon$ is selected to ensure that the support of $\Zopt$ is recovered in one iteration. These results concur with the analysis and discussion in Section~\ref{sec2a}. The behavior of Algorithm~\ref{alg:alternate} is similar for $s=5$ and $s=10$, with the latter case taking more iterations to enter the second phase of convergence. This makes sense since there are more coefficients to learn for larger $s$. This experiment shows that Algorithm~\ref{alg:alternate} is robust to initialization.

\begin{figure}[!h]
\centering
\includegraphics[width=.9\textwidth]{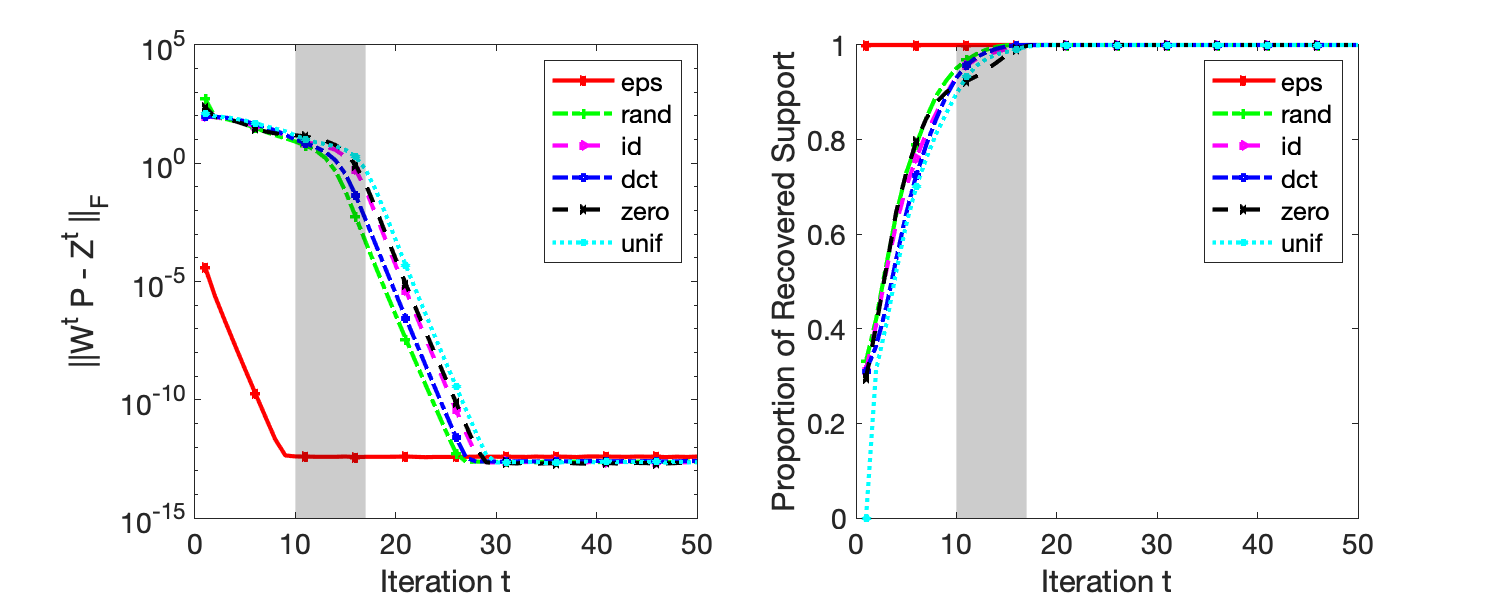}
\caption{The performance of Algorithm~\ref{alg:alternate} over iterations with various initializations for $s=5$: objective function (left) and proportion of recovered support of $\Zopt$ (right).}
\label{fig:init5}
\end{figure}

\begin{figure}[!h]
\centering
\includegraphics[width=.9\textwidth]{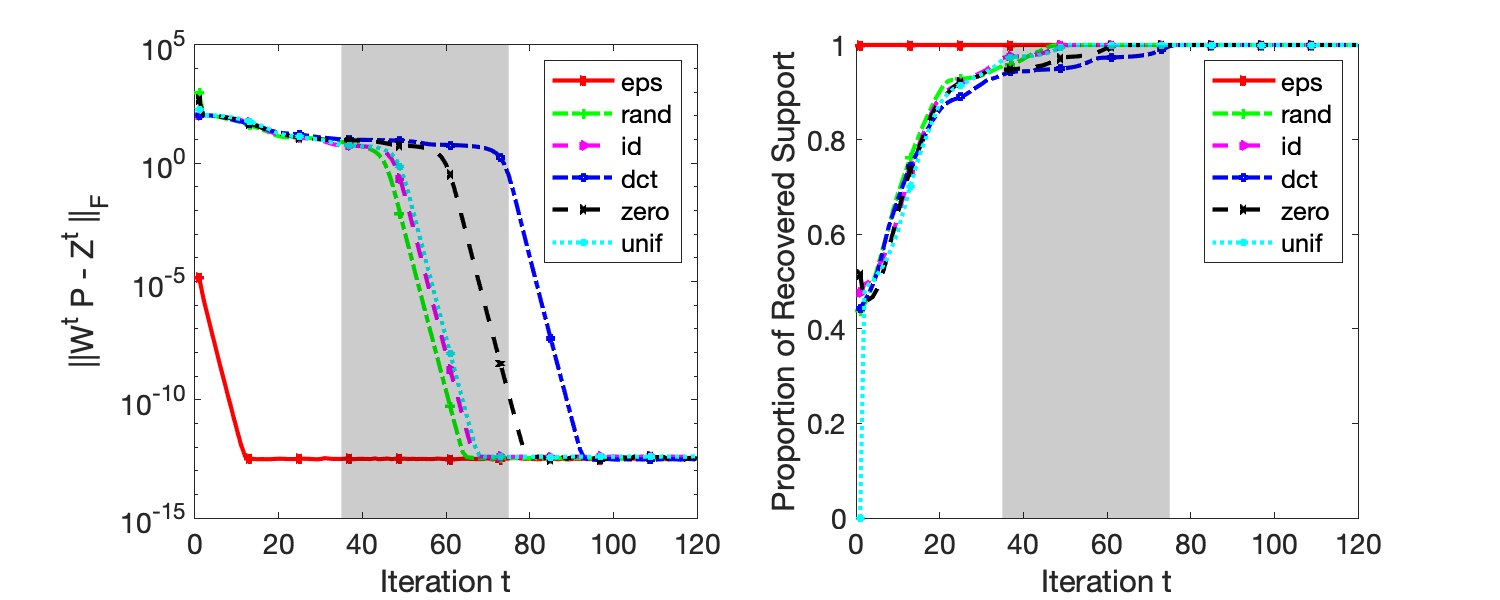}
\caption{The performance of Algorithm~\ref{alg:alternate} with various initializations for $s=10$: objective function (left) and proportion of recovered support of $\Zopt$ (right).}
\label{fig:init10}
\end{figure}

\subsection{The $q$ factor in Proposition~\ref{conj}}
\begin{figure}[h!]
\centering
\includegraphics[width=\textwidth]{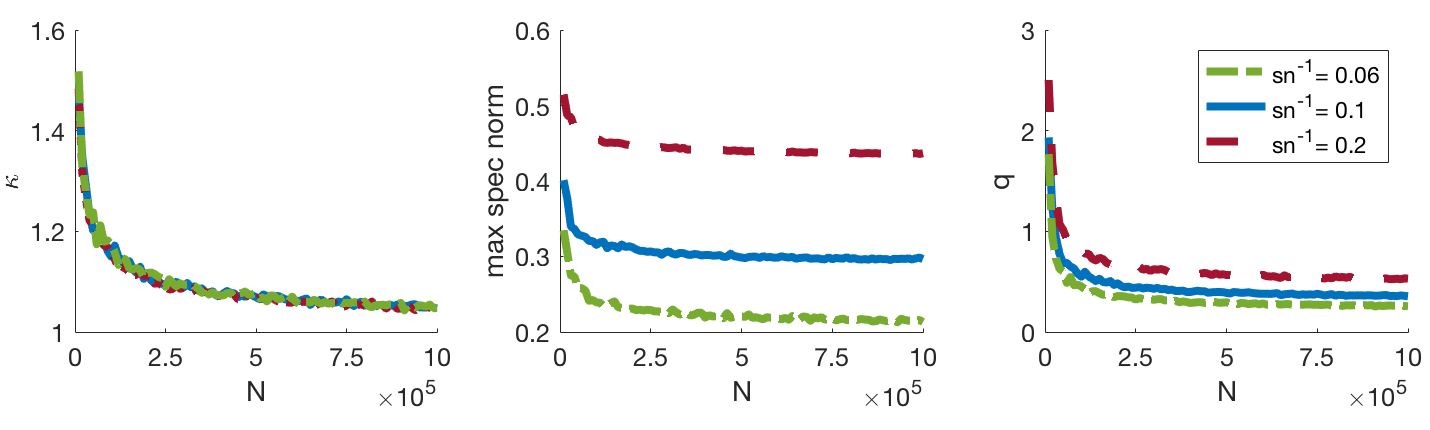}
\includegraphics[width=\textwidth]{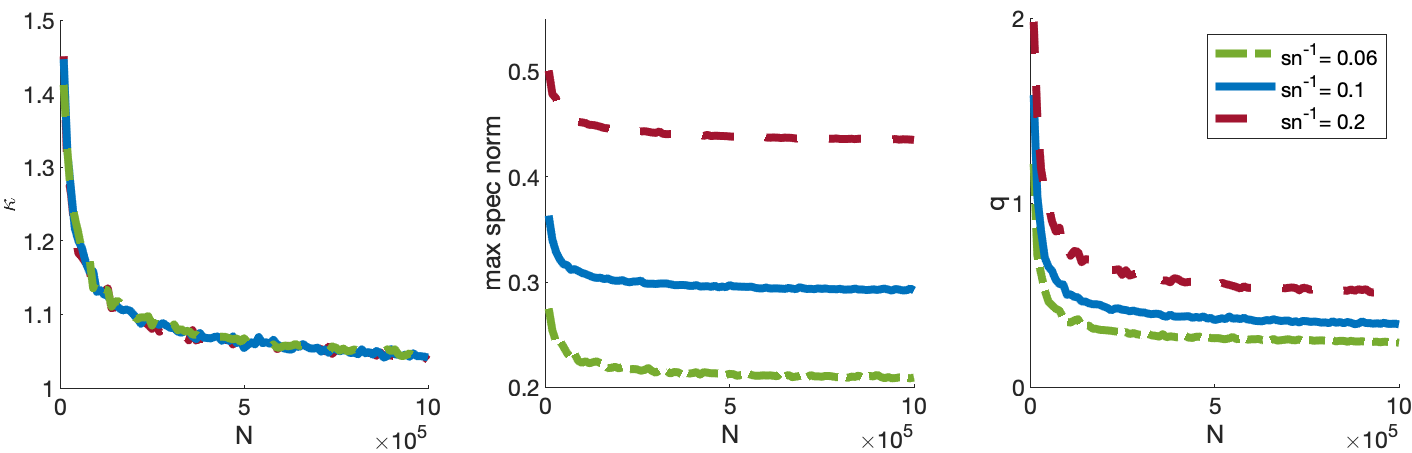}
\caption{On the x-axis we plot the number of training data points $N$ and on the y-axis, (Left) the condition number $\kappa(\Zopt)$, (Center) the maximum spectral norm over choice $k$ for $\D_k \Zopt \tilde{\D}_k$, and (Right) the contraction factor $q = (\kappa^4(\Zopt)/\| \Zopt \|_{2}) \max_k \|\D_k \Zopt \tilde{\D}_k \|_2$. The top row of plots corresponds to the case when the non-zeros are i.i.d. Gaussian and the bottom plots correspond to the non-zeros being i.i.d. scaled random signs. In both cases, $n=50$ and we vary $s/n = \{0.06, 0.1, 0.2\}$.
}
\label{fig:bounds}
\end{figure}
In our last experiment, we illustrate Proposition~\ref{conj} empirically.
For each trial, we fix the signal dimension to be $n = 50$.  
In addition to varying $N$, we vary $s /n = \{0.06, 0.1, 0.2\}$. In the first experiment, the $s$ nonzero entries for each column of $\Zopt$ are selected uniformly at random where values are drawn i.i.d. from a Gaussian distribution with mean $0$ and variance $n / sN$.
We also simulate the case when the nonzeros are i.i.d. scaled random signs with mean $0$ and variance $n / sN$, with ``+" and ``-" being equally probable.
We then compute the following functions of $\Zopt$: the condition number $\kappa(\Zopt)$, the maximum spectral norm over choice $k$ for $\D_k \Zopt \tilde{\D}_k$, and the contraction factor $q = (\kappa^4(\Zopt)/\| \Zopt \|_{2}) \max_k \|\D_k \Zopt \tilde{\D}_k \|_2$ that is a function of these quantities. The top and bottom rows of Fig.~\ref{fig:bounds} plot these quantities for the Gaussian and scaled sign coefficients, respectively. The plots clearly show that $q<1$ for large $N$ for each distribution and $s/n$ setting. The maximum spectral norm plots quickly converged close to their expected values of $\sqrt{s-1/n-1}$. Moreover, $\kappa(\Zopt)$ approaches close to $1$ as $N$ increases as expected, indicating that the probabilistic sparsity model approaches the scenario in Theorem~\ref{thm:main1}. We have observed similar empirical behavior for the $q$ factor, when the non-zero entries are drawn from other distributions. 

\section{Conclusion} \label{sec4}

In this work, we presented a study of the model recovery properties of the alternating minimization algorithm for structured, unitary sparsifying transform learning. The algorithm converges rapidly to the generative model(s) from local neighborhoods under mild assumptions and the assumptions are shown to hold for various probabilistic models. In addition to showing that the algorithm convergences linearly, we also characterize the asymptotic behavior of the convergence rate and radius with respect to the number of data points or training signals $N$. In practice, the sparsifying operator learning method is robust to initialization. Our numerical results and initial analysis showed that the algorithm performs well under various initializations, with similar eventual rates of convergence. We have observed empirically that the algorithm converges to the specific $\Wopt$ even with quite large perturbations for the initial $\bW^0$ from $\Wopt$ (i.e., large $\epsilon$ values in Assumption $(A_5)$). We plan to further analyze the effects of initialization and the behavior of transform learning in inverse problems in future work.

\section*{Funding}
This work 
was partly done when S.R. was at the University of Michigan, Ann Arbor, and
was supported by the 
Office of Naval Research [grant number N00014-15-1-2141 to S.R.]; 
Defense Advanced Research Projects Agency Young Faculty Award [grant number D14AP00086 to S.R.];
US Army Research Office Multidisciplinary University Research Initiative [grant numbers W911NF-11-1-0391, 2015-05174-05 to S.R.]; 
National Institutes of Health [grant numbers R01 EB023618, 
U01 EB018753
to S.R.]; 
National Science Foundation CCF [grant number 1320953 to S.R.];
and the University of Michigan-Shanghai Jiao Tong University seed grant to S.R.
This material was also supported by the National Science Foundation [grant number DMS-1440140 to A.M. and D.N.] while the authors were in residence at the Mathematical Science Research Institute in Berkeley, California, during the Fall 2017 semester, 
National Science Foundation CAREER [grant number 1348721 to A.M. and D.N.]; 
and National Science Foundation BIGDATA  [grant number 1740325 to A.M. and D.N.].

\begin{appendices}
\section{Proofs of Proposition~\ref{convradius} and Remark~\ref{rm1b}} \label{app1}
Here, we present the proof of Proposition~\ref{convradius} and briefly comment on Remark~\ref{rm1b}.
The form of $\epsilon_1$ was discussed in the proof of Lemma~\ref{lem:z} and ensures recovery of the support of $\Zopt$.
We derive the form of $\epsilon_2$ (i.e., sufficient $\epsilon$ for the operator update step) based on the proof of Lemma~\ref{lem:w}.
In particular, we bound $\epsilon$ to ensure convergence of the Taylor Series in
\eqref{taylorser} and to bound the higher order terms in the product $\V_z^1{\U_z^1}^T$.

The matrix inverse series $(\Id + \Zopt {\mathbf{\Delta}^1}^T)^{-1} = \Id - \Zopt {\mathbf{\Delta}^1}^T + O ((\mathbf{\Delta}^1)^2)$ converges when $\left \| \Zopt {\mathbf{\Delta}^1}^T \right \|_{2} <1$. We have $\left \| \Zopt {\mathbf{\Delta}^1}^T \right \|_{2} \leq $ $\begin{Vmatrix}
\Zopt
\end{Vmatrix}_{2} \begin{Vmatrix}
 {\mathbf{\Delta}^1}
\end{Vmatrix}_{2} \leq \begin{Vmatrix}
P
\end{Vmatrix}_{2} \begin{Vmatrix}
 {\mathbf{\Delta}^1}
\end{Vmatrix}_{F} \leq \epsilon$, where the last inequality follows from Assumption $(A_1)$ and Lemma~\ref{lem:z}. Thus, $\epsilon<1$ suffices.
Similarly, the series for $(\Zopt {\Z^1}^T {\Z^1} \Zopt^T )^{\frac{1}{2}} =$ $ (\Id + \Zopt{\mathbf{\Delta}^1}^T + {\mathbf{\Delta}^1}\Zopt^{T} +  \Zopt{\mathbf{\Delta}^1}^T {\mathbf{\Delta}^1} \Zopt^{T}  )^{\frac{1}{2}}$
converges when the perturbation $\mathbf{H} \triangleq \Zopt{\mathbf{\Delta}^1}^T + {\mathbf{\Delta}^1}\Zopt^{T} + \Zopt{\mathbf{\Delta}^1}^T {\mathbf{\Delta}^1} \Zopt^{T}$ satisfies $\left \| \mathbf{H} \right \|_{2}<1$.
Since $\left \| \mathbf{H} \right \|_{2} \leq \left \| \Zopt{\mathbf{\Delta}^1}^T \right \|_{2} + \left \| {\mathbf{\Delta}^1}\Zopt^{T}\right \|_{2} +$ $ \left \| \Zopt{\mathbf{\Delta}^1}^T {\mathbf{\Delta}^1} \Zopt^{T}\right \|_{2} \leq$ $2\left \| \mathbf{\Delta}^1 \right \|_{2} + \left \| \mathbf{\Delta}^1 \right \|_{2}^{2} \leq 2\left \| \mathbf{\Delta}^1 \right \|_{F} + \left \| \mathbf{\Delta}^1 \right \|_{F}^{2} \leq 2 \epsilon + \epsilon^2$, we have that $\epsilon^2 + 2\epsilon <1$ or $\epsilon < \sqrt{2} -1$ suffices, which also works for the matrix inverse series.

Using the 
notation above,
the product in \eqref{taylorser} simplifies as
\begin{align}
    \V_z^1{\U_z^1}^T &= \begin{pmatrix} \Id - \Zopt {\mathbf{\Delta}^1}^T + \mathbf{H}_1\end{pmatrix} \begin{pmatrix} \Id + 0.5\mathbf{H} + \mathbf{H}_2\end{pmatrix} \\
    &= \Id + \frac{1}{2}(\mathbf{\Delta}^1 \Zopt^T - \Zopt {\mathbf{\Delta}^1}^T ) + O((\mathbf{\Delta}^1)^2), \tag{\ref{taylorser}}
\end{align}
where $\mathbf{H}_1$ and $\mathbf{H}_2$ are the remaining higher order terms in the respective series. The $O((\mathbf{\Delta}^1)^2)$ terms in \eqref{taylorser} are given as $0.5 \Zopt {\mathbf{\Delta}^1}^T {\mathbf{\Delta}^1} \Zopt^{T} + \mathbf{H}_{2} - \Zopt {\mathbf{\Delta}^1}^T (0.5 \mathbf{H} + \mathbf{H}_2) + \mathbf{H}_1 (\Id + \mathbf{H} )^{\frac{1}{2}}$.
We bound the Frobenius norm of these summands to characterize $C$ in $
\left \| O((\mathbf{\Delta}^1)^2) \right \|_{F} \leq C \epsilon^2$.

First, we have the following bound:
\begin{equation}
    \left \| 0.5 \Zopt {\mathbf{\Delta}^1}^T {\mathbf{\Delta}^1} \Zopt^{T} \right \|_{F} \leq 0.5 \left \| \Zopt \right \|_{2}^{2} \left \| \mathbf{\Delta}^1 \right \|_{F}^2 \leq 0.5 \epsilon^2. \label{eq:zp1}
\end{equation}
We also have the next bound, which follows from $\left \| \Id + \mathbf{H} \right \|_{2} \leq 1 + \left \| \mathbf{H} \right \|_{2} < 2$ (since $\left \| \mathbf{H} \right \|_{2} < 1$) and $\left \| \mathbf{H}_1 \right \|_{F} \leq $ $ \sum_{j\geq 2} \left \| \mathbf{\Delta}^1 \right \|_{F}^{j}\leq $ $  \sum_{j\geq 2} \epsilon^j = \epsilon^2/(1-\epsilon)$:
\begin{equation}
    \left \| \mathbf{H}_1 (\Id + \mathbf{H} )^{\frac{1}{2}} \right \|_{F} \leq \left \| \mathbf{H}_1 \right \|_{F} \left \| \Id + \mathbf{H} \right \|_{2}^{\frac{1}{2}}\leq \frac{\sqrt{2} \epsilon^2}{1-\epsilon}. \label{eq:zp2}
\end{equation}
Third, we have for the matrix square root Taylor series that $\left \| \mathbf{H}_{2} \right \|_{F} \leq (\left \| \mathbf{H} \right \|_{F}^2 /8) +$ $ (\left \| \mathbf{H} \right \|_{F}^3 /16) + $ $ (5\left \| \mathbf{H} \right \|_{F}^4 /128)+..$, where the right hand side is the magnitude of the remainder of the series for $\sqrt{1-\left \| \mathbf{H} \right \|_{F}}$ after the first order term. Thus, we have the following standard bound for the remainder for some $-\left \| \mathbf{H} \right \|_{F}\leq \alpha \leq 0$:
\begin{equation}
    \left \| \mathbf{H}_{2} \right \|_{F} \leq \frac{\left \| \mathbf{H}\right \|_{F}^{2}}{8(1+ \alpha)^{3/2}} \leq \frac{\left \| \mathbf{H} \right \|_{F}^{2}}{8(1 - \left \| \mathbf{H} \right \|_{F})^{3/2}} \leq \frac{9 \epsilon^2}{8(1 - \epsilon^2 - 2 \epsilon)^{3/2}}.
    \label{eq:zp3}
\end{equation}
Here, the last inequality used $\left \| \mathbf{H} \right \|_{F} \leq \epsilon^2 + 2 \epsilon$ and $\left \| \mathbf{H} \right \|_{F} \leq 3 \epsilon$ when $\epsilon<1$.
Finally, we have
\begin{equation}
    \left \| \Zopt {\mathbf{\Delta}^1}^T (0.5 \mathbf{H} + \mathbf{H}_2) \right \|_{F} \leq 0.5 \left \| \mathbf{\Delta}^{1} \right \|_{F} \left \| \mathbf{H} \right \|_{F} + \left \| \mathbf{\Delta}^{1} \right \|_{F} \left \| \mathbf{H}_{2} \right \|_{F} \leq \frac{3 \epsilon^2}{2} + \frac{9 \epsilon^3}{8(1 - \epsilon^2 - 2 \epsilon)^{3/2}}.
    \label{eq:zp4}
\end{equation}

Combining~\eqref{eq:zp1}-\eqref{eq:zp4}, we easily get $\left \| O((\mathbf{\Delta}^1)^2) \right \|_{F} \leq C(\epsilon) \epsilon^2$ with $C(\epsilon)$ as defined in \eqref{convradius}.
Including the $C(\epsilon) \epsilon^2$ term in \eqref{upb1}, the effective convergence rate in Lemma~\ref{lem:w} is $q + C(\epsilon) \epsilon$ with the dominant $q:=   \max_k \|  \D_k \Zopt \tilde{\D}_k\|_2$. 
Thus, $q + C(\epsilon) \epsilon < 1$ suffices for linear convergence or $\epsilon < (1-q)/C(\epsilon)$. Since $C(\epsilon)$ is monotone increasing in $0 \leq  \epsilon < \sqrt{2}-1$ (the upper bound comes from the aforementioned Taylor Series convergence conditions) with $C(\sqrt{2}-1)=\infty$, $\exists \epsilon \in [0, \sqrt{2}-1)$ for which $C(\epsilon) \epsilon = 1-q$. This would be $\epsilon_2$ (largest permissible $\epsilon$) for the operator update step. 
It is easy to see that this $\epsilon_2$ is equivalently obtained by maximizing $f(\epsilon_0)$ in $[0, \sqrt{2}-1)$, where $f(\epsilon_0) = \min \begin{pmatrix}
(1 - q)/C(\epsilon_0),\epsilon_0
\end{pmatrix}$.
Note that we ignored the higher order effects in our Assumptions,
since $C(\epsilon) \epsilon$ is negligible for sufficiently small $\epsilon$, where the effective convergence rate is approximately $q:=   \max_k \|  \D_k \Zopt \tilde{\D}_k\|_2$.

In the case of Remark~\ref{rm1b} for Theorem~\ref{thm:main2}, the form of $\epsilon_1$ remains the same as above.
The proof of Lemma~\ref{lem:z_not_orth} showed that the matrix inverse series converges when the perturbation term satisfies $ \| \mathbf{G}^{-1} \Zopt \boldsymbol{\Delta}^{1^T}\|_2<1$, or $\epsilon < \kappa^{-2}(\Zopt)$ suffices. Similarly, the bounds for the other series terms also depend on $\kappa(\Zopt)$. Clearly, as $\kappa(\Zopt) \to 1$, we approach Assumption $(A_4)$ for which $\epsilon_2$ takes the same form as in Proposition~\ref{convradius}.
The limit for $\epsilon_2$ in Remark~\ref{rm1b} holds for the distributions in Proposition~\ref{conj} because $\max_{1\leq k \leq n} \| \D_k \Zopt \tilde{\D}_k \|_2 \to \sqrt{(s-1)/(n-1)}$ (see \eqref{glim}) almost surely as $N \to \infty$.

\section{Distributions in Section~\ref{convradiussection}} \label{app2}
Various distributions of $\Zopt$ lead to interesting behavior for $\epsilon_1 = 0.5 \min_{1\leq j \leq N} \beta \left( \col{\Z}^*/\|\col{\Z}^*\|_{2} \right)$.
Here, we discuss 
example distributions
and the corresponding behavior of $\epsilon_1$, which we show to be lower bounded by $O(1/\sqrt{s})$.
The distributions below satisfy the conditions in Proposition~\ref{conj} (i.e., the column supports of $\Zopt$ of cardinality $s$ are drawn independently and uniformly at random, and the non-zero entries are i.i.d. with mean zero and variance $n/sN$) to ensure good convergence rate properties.
\begin{enumerate}
    \item  The non-zeros are random signs scaled by $\sqrt{n/sN}$, and ``+" and ``-" are equally probable $\Rightarrow \epsilon_1 = 0.5/\sqrt{s}$.
    \item Non-zeros are uniformly distributed in $[-b,-c]\cup [c,b]$ with 
    $b>c > 0$.
    When $c=\sqrt{3n/7sN}$ and $b=\sqrt{12n/7sN}$, then  $\epsilon_1 \geq  0.25/\sqrt{s}$.
    \item Non-zeros are drawn from the density $p(z)= k e^{-a|z|}$ when $c \leq |z|\leq b$ and $p(z)=0$ otherwise, with $k=0.5 a e^{ac}/(1-e^{-a(b-c)})$ and $b > c > 0$ and $a>0$. For a given $K>1$, $\beta = \log K/(K-1)$, $a=\sqrt{(2-K\beta^2)sN/n}$, $c=\beta/a$, and $b=Kc$ $\Rightarrow$ $\epsilon_1 \geq 0.5/K\sqrt{s}$.
\end{enumerate}
The nonzeros of $\Zopt$ above are assumed to be upper bounded (in practice, the bound is determined by the peak physical intensity in the signals considered) and lower bounded (determined by numerical precision).

We briefly show the $\epsilon_1$ bounds for the examples above. When the non-zeros of $\Zopt$ are random signs scaled by $\sqrt{n/sN}$, it is obvious that $\epsilon_1 = 0.5/\sqrt{s}$.

When the non-zeros are uniformly distributed with $p(z)=0.5/(b-c)$ for $ z \in [-b,-c]\cup [c,b]$ with $b>c > 0$,
then clearly $E[z] = 0$.
The variance of the distribution is $(b^2 + c^2 + bc)/3$. Setting this to the required value of $n/sN$ yields $c^2 +bc$ $ + b^2 - (3n/sN) = 0$. Solving the quadratic equation for $c$ yields a root $0.5 \begin{pmatrix}
-b + \sqrt{(12n/sN)-3b^2}
\end{pmatrix}$, which is nonnegative when $(12n/sN)-3b^2 > b^2$ (i.e., $b < \sqrt{3n/sN}$).
Moreover, $c<b$ implies $b > \sqrt{n/sN}$. 
Then the distributions with $b \in \begin{pmatrix}
\sqrt{n/sN},\sqrt{3n/sN}
\end{pmatrix}$ and $c = 0.5 \begin{pmatrix}
-b + \sqrt{(12n/sN)-3b^2}
\end{pmatrix}$ readily satisfy
$\beta (\col{\Z}^*) \geq c$ and $\|\col{\Z}^*\|_{2} \leq b\sqrt{s}$.
Thus, clearly $\epsilon_1 \geq 0.5c/b\sqrt{s}$. For the special case $b=\sqrt{12n/7sN}$ and $c=\sqrt{3n/7sN}$$\Rightarrow \epsilon_1 \geq  0.25/\sqrt{s}$.

When the nonzeros are drawn from $p(z)= k e^{-a|z|}$ for $c \leq |z|\leq b$ and $p(z)=0$ otherwise, with $k=0.5 a e^{ac}/(1-e^{-a(b-c)})$ 
and $b > c > 0$
and $a>0$, clearly $E[z] = 0$ and the variance is $2a^{-2} +$ $ 2a^{-1}\begin{bmatrix}
(ce^{-ac}-be^{-ab})/(e^{-ac}-e^{-ab})
\end{bmatrix} +$ $ \begin{bmatrix}
(c^{2}e^{-ac}-b^{2}e^{-ab})/(e^{-ac}-e^{-ab})
\end{bmatrix}$. Setting the variance to $n/sN$ yields a nonlinear equation in $a$, $b$, and $c$, with many solutions.
To extract one set of solutions, we set $ce^{-ac} = be^{-ab}$ and $b=Kc$ for some $K>1$, which implies $ac = \log K/(K-1) \triangleq \beta$. Substituting these in the variance equation simplifies it to $n/sN = 2a^{-2} - bc = 2a^{-2} - K \beta^{2} a^{-2}$.
Thus, $a=\sqrt{(2-K\beta^2)sN/n}$ with $c=\beta/a$ and $b=Kc$ in this case.
We then easily get $\epsilon_1 \geq 0.5c/b\sqrt{s} = 0.5/K\sqrt{s}$.

\section{Proof of Lemma~\ref{sparsecodegeneral}} \label{app3}

Each column $\col{\Z}^1$ ($1\leq j \leq N$) of the sparse coefficients matrix $\Z^1$ satisfies
\begin{align}
\col{\Z}^1 \stackrel{}{=} H_s( & \bW^{0}  \col{\bP}) \stackrel{}{=} H_s(\Wopt \col{\bP} + \E^{0} \col{\bP}) \stackrel{(A_1)}{=} H_s(\col{\Z}^* + \E^{0} \col{\bP}) \stackrel{}{=} \mathbf{\Gamma}_j^1 \odot \mathbf{\Gamma}_j^* \col{\Z}^* + \mathbf{\Gamma}_j^1 \E^{0} \col{\bP}, \label{eq:zh1}
\end{align}
where $\E^0 =\bW^0 - \Wopt$ and $\mathbf{\Gamma}_j^1$ is a diagonal matrix with a one at the $(i,i)$th entry when $i \in S(\col{\Z}^1)$ and zero otherwise. Matrix $\mathbf{\Gamma}_j^*$ is similarly defined with respect to $S(\col{\Z}^*)$, and ``$\odot$" denotes element-wise multiplication.

It follows that $\col{\Z}^1 - \col{\Z}^*$ $= \begin{bmatrix}
\begin{pmatrix}
\mathbf{\Gamma}_j^1 \odot \mathbf{\Gamma}_j^* 
\end{pmatrix} - \mathbf{\Gamma}_j^* 
\end{bmatrix} \col{\Z}^* + \mathbf{\Gamma}_j^1 \E^{0} \col{\bP}$, where the two summands have disjoint supports because $\begin{bmatrix}
\begin{pmatrix}
\mathbf{\Gamma}_j^1 \odot \mathbf{\Gamma}_j^* 
\end{pmatrix} - \mathbf{\Gamma}_j^* 
\end{bmatrix}$ is diagonal with zeros and with ``-1" only for the portion of the support of $\col{\Z}^*$ left out in $S(\col{\Z}^1)$. Therefore, we have
\begin{equation}
\begin{Vmatrix}
\col{\Z}^1 - \col{\Z}^*
\end{Vmatrix}_{2}^{2} = \begin{Vmatrix}
\begin{bmatrix}
\begin{pmatrix}
\mathbf{\Gamma}_j^1 \odot \mathbf{\Gamma}_j^* 
\end{pmatrix} - \mathbf{\Gamma}_j^* 
\end{bmatrix} \col{\Z}^*
\end{Vmatrix}_{2}^{2} + \begin{Vmatrix}
\mathbf{\Gamma}_j^1 \E^{0} \col{\bP}
\end{Vmatrix}_{2}^{2}.
\label{eq:zh2}    
\end{equation}
Let $\mathbf{A} \triangleq \E^0 \bP$. To simplify and bound \eqref{eq:zh2}, 
we first consider the case when only one element, say $ k \in S(\col{\Z}^*)$ was left out in $S(\col{\Z}^1)$. Suppose that in its place, we have a new entry $p \in S(\col{\Z}^1)$ with $p \notin S(\col{\Z}^*)$.
Then, we must have 
\begin{equation}
    \begin{vmatrix}
0 + \mathbf{A}_{(p,j)}
\end{vmatrix} \geq \begin{vmatrix}
\Zopt_{(k,j)} + \mathbf{A}_{(k,j)}
\end{vmatrix} \geq \begin{vmatrix}
\Zopt_{(k,j)}
\end{vmatrix} - \begin{vmatrix}
\mathbf{A}_{(k,j)}
\end{vmatrix},
\label{eq:zh3}
\end{equation}
where the first inequality is necessary for the $p$th entry to swap with the $k$th entry in the support and the second inequality is the reverse triangle inequality.
Thus, we have
\begin{equation}
    \begin{vmatrix}
\Zopt_{(k,j)}
\end{vmatrix} \leq \begin{vmatrix}
\mathbf{A}_{(k,j)}
\end{vmatrix} + \begin{vmatrix}
\mathbf{A}_{(p,j)}
\end{vmatrix}.
\label{eq:zh4}
\end{equation}
Note that this holds even if $\mathbf{A}_{(p,j)} = 0$ and $\mathbf{A}_{(k,j)} = - \Zopt_{(k,j)}$, i.e., only the $k$th entry is left out of $S(\col{\Z}^1)$ without a new nonzero ($p$th) entry.
Using these results, \eqref{eq:zh2} can be readily simplified for this case as
\begin{align}
\begin{Vmatrix}
\col{\Z}^1 - \col{\Z}^*
\end{Vmatrix}_{2}^{2} & = \begin{vmatrix}
\Zopt_{(k,j)}
\end{vmatrix}^{2} + \begin{Vmatrix}
\mathbf{\Gamma}_j^1 \mathbf{A}_{(.,j)}
\end{Vmatrix}_{2}^{2} \nonumber \\ & \stackrel{\eqref{eq:zh4}}{\leq} \begin{vmatrix}
\mathbf{A}_{(k,j)}
\end{vmatrix}^2 + \begin{vmatrix}
\mathbf{A}_{(p,j)}
\end{vmatrix}^2 + 2 \begin{vmatrix}
\mathbf{A}_{(k,j)}
\end{vmatrix} \begin{vmatrix}
\mathbf{A}_{(p,j)}
\end{vmatrix} + \begin{Vmatrix}
 \mathbf{A}_{(.,j)}
\end{Vmatrix}_{2}^{2} - \begin{Vmatrix}
\left ( \Id - \mathbf{\Gamma}_j^1 \right ) \mathbf{A}_{(.,j)}
\end{Vmatrix}_{2}^{2} \nonumber \\
& \leq \begin{vmatrix}
\mathbf{A}_{(k,j)}
\end{vmatrix}^2 +  \begin{vmatrix}
\mathbf{A}_{(p,j)}
\end{vmatrix}^2 + 2 \max\begin{pmatrix}
\begin{vmatrix}
\mathbf{A}_{(k,j)}
\end{vmatrix}^2,\begin{vmatrix}
\mathbf{A}_{(p,j)}
\end{vmatrix}^2
\end{pmatrix}
+ \begin{Vmatrix}
 \mathbf{A}_{(.,j)}
\end{Vmatrix}_{2}^{2} - \begin{Vmatrix}
\left ( \Id - \mathbf{\Gamma}_j^1 \right ) \mathbf{A}_{(.,j)}
\end{Vmatrix}_{2}^{2}   \nonumber \\
& =   \begin{vmatrix}
\mathbf{A}_{(p,j)}
\end{vmatrix}^2 + 2 \max\begin{pmatrix}
\begin{vmatrix}
\mathbf{A}_{(k,j)}
\end{vmatrix}^2,\begin{vmatrix}
\mathbf{A}_{(p,j)}
\end{vmatrix}^2
\end{pmatrix}
+ \begin{Vmatrix}
 \mathbf{A}_{(.,j)}
\end{Vmatrix}_{2}^{2} - \begin{Vmatrix}
\left ( \Id - \tilde{\mathbf{\Gamma}}_j^1 \right ) \mathbf{A}_{(.,j)}
\end{Vmatrix}_{2}^{2}. 
\label{eq:zh5}
\end{align}
The last equality above follows because $\left ( \Id - \mathbf{\Gamma}_j^1 \right ) \mathbf{A}_{(.,j)}$ includes $\mathbf{A}_{(k,j)}$ as a nonzero entry, and $\tilde{\mathbf{\Gamma}}_j^1$ is the same as $\mathbf{\Gamma}_j^1$ except that its $(k,k)$th entry is also $1$.

In \eqref{eq:zh5}, $\begin{Vmatrix}
 \mathbf{A}_{(.,j)}
\end{Vmatrix}_{2}^{2} - \begin{Vmatrix}
\left ( \Id - \tilde{\mathbf{\Gamma}}_j^1 \right ) \mathbf{A}_{(.,j)}
\end{Vmatrix}_{2}^{2} 
= \begin{Vmatrix}
\tilde{\mathbf{\Gamma}}_j^1 \mathbf{A}_{(.,j)}
\end{Vmatrix}_{2}^{2}$ $\leq \begin{Vmatrix}
\mathbf{A}_{(.,j)}
\end{Vmatrix}_{2}^{2}$. The first two summands in \eqref{eq:zh5} are bounded by $\begin{Vmatrix}
\mathbf{A}_{(.,j)}
\end{Vmatrix}_{2}^{2}$ and $2 \begin{Vmatrix}
\mathbf{A}_{(.,j)}
\end{Vmatrix}_{2}^{2}$, respectively. Thus, when one element of the true support is misestimated in each column of $\Z^1$, we have
\begin{equation}
\| \Z^1 - \Zopt \|_F^2 = \sum_{j=1}^{N} \begin{Vmatrix}
\col{\Z}^1 - \col{\Z}^*
\end{Vmatrix}_{2}^{2} \leq 4 \sum_{j=1}^N \begin{Vmatrix}
\mathbf{A}_{(.,j)}
\end{Vmatrix}_{2}^{2} = 4 \begin{Vmatrix}
\E^0 \bP
\end{Vmatrix}_{F}^{2} \leq 4 \begin{Vmatrix}
\E^0 
\end{Vmatrix}_{F}^{2} \begin{Vmatrix}
\bP
\end{Vmatrix}_{2}^{2} \stackrel{(A_1)}{=} 4 \begin{Vmatrix}
\E^0 
\end{Vmatrix}_{F}^{2} \leq 4 \epsilon^2.
\label{eq:zh6}    
\end{equation}
This proves \eqref{errorsparse} for the case when (at most) one entry of the support of each $\col{\Z}^*$ is wrongly estimated (left out) in $\col{\Z}^1$.
In the general case, when multiple elements of the support of $\col{\Z}^*$ may be left out in $S(\col{\Z}^1)$, each such element can be paired with a corresponding ``new" element in  $S(\col{\Z}^1)$, and
\eqref{eq:zh4} holds for each such pair.\footnote{The elements left out of the support of $\col{\Z}^*$ can be paired with ``new" elements in $S(\col{\Z}^1)$ one by one, i.e., no overlaps between the pairs. If multiple new elements satisfy \eqref{eq:zh4}, the pairing picks the one with the smallest magnitude.}
The proof in this general case is similar to the aforementioned case, except that there would be summations over the left out or new indices in various equations.
For example, the first summand $\begin{vmatrix}
\mathbf{A}_{(p,j)}
\end{vmatrix}^2$ in \eqref{eq:zh5} would include a summation over all ``new" indices $p$ in $S(\col{\Z}^1)$. However, this summation is still bounded by $\begin{Vmatrix}
\mathbf{A}_{(.,j)}
\end{Vmatrix}_{2}^{2}$. 
Similarly, the second summand in \eqref{eq:zh5} would be summed over the number of (disjoint) pairs, which is again bounded by $2 \begin{Vmatrix}
\mathbf{A}_{(.,j)}
\end{Vmatrix}_{2}^{2}$. Thus, $\left \| \col{\Z}^1 - \col{\Z}^* \right \|_{2}^{2} \leq 4 \begin{Vmatrix}
\mathbf{A}_{(.,j)}
\end{Vmatrix}_{2}^{2} $ holds generally (including when the true support is correctly estimated in $\col{\Z}^1$).
Therefore, a bound as in \eqref{eq:zh6} holds in the general case.
\end{appendices}

\bibliographystyle{imaiai}
\bibliography{dbib}

\ifx\undefined\BySame
\newcommand{\BySame}{\leavevmode\rule[.5ex]{3em}{.5pt}\ }
\fi
\ifx\undefined\textsc
\newcommand{\textsc}[1]{{\sc #1}}
\newcommand{\emph}[1]{{\em #1\/}}
\let\tmpsmall\small
\renewcommand{\small}{\tmpsmall\sc}
\fi
\begin{thebibliography}{99}

\bibitem{agra1}
\textsc{Agarwal, A., Anandkumar, A., Jain, P.  {\small \&} Netrapalli, P.}
  (2016) Learning Sparsely Used Overcomplete Dictionaries via Alternating
  Minimization. \emph{SIAM Journal on Optimization}, \textbf{26}(4),
  2775--2799.

\bibitem{agra2}
\textsc{Agarwal, A., Anandkumar, A., Jain, P., Netrapalli, P.  {\small \&}
  Tandon, R.}  (2014) Learning Sparsely Used Overcomplete Dictionaries.
  \emph{Journal of Machine Learning Research}, \textbf{35}, 1--15.

\bibitem{elad}
\textsc{Aharon, M., Elad, M.  {\small \&} Bruckstein, A.}  (2006) {K-SVD}: An
  Algorithm for Designing Overcomplete Dictionaries for Sparse Representation.
  \emph{IEEE Transactions on signal processing}, \textbf{54}(11), 4311--4322.

\bibitem{arora2015simple}
\textsc{Arora, S., Ge, R., Ma, T.  {\small \&} Moitra, A.}  (2015) Simple,
  efficient, and neural algorithms for sparse coding. in \emph{Conference on
  Learning Theory}, pp. 113--149.

\bibitem{arora1}
\textsc{Arora, S., Ge, R.  {\small \&} Moitra, A.}  (2014) New Algorithms for
  Learning Incoherent and Overcomplete Dictionaries. in \emph{Proceedings of
  The 27th Conference on Learning Theory}, pp. 779--806.

\bibitem{bai2018subgradient}
\textsc{Bai, Y., Jiang, Q.  {\small \&} Sun, J.}  (2018) Subgradient Descent
  Learns Orthogonal Dictionaries. \emph{arXiv preprint arXiv:1810.10702}.

\bibitem{bao2013fast}
\textsc{Bao, C., Cai, J.-F.  {\small \&} Ji, H.}  (2013) Fast sparsity-based
  orthogonal dictionary learning for image restoration. in \emph{Proceedings of
  the IEEE International Conference on Computer Vision}, pp. 3384--3391.

\bibitem{bao1}
\textsc{Bao, C., Ji, H., Quan, Y.  {\small \&} Shen, Z.}  (2014) L0 Norm Based
  Dictionary Learning by Proximal Methods with Global Convergence. in
  \emph{IEEE Conference on Computer Vision and Pattern Recognition (CVPR)}, pp.
  3858--3865.

\bibitem{bao2}
\textsc{\BySame{}}  (2016) Dictionary Learning for Sparse Coding: Algorithms
  and Convergence Analysis. \emph{IEEE Transactions on Pattern Analysis and
  Machine Intelligence}, \textbf{38}(7), 1356--1369.

\bibitem{bao22}
\textsc{Bao, C., Ji, H.  {\small \&} Shen, Z.}  (2015) Convergence analysis for
  iterative data-driven tight frame construction scheme. \emph{Applied and
  Computational Harmonic Analysis}, \textbf{38}(3), 510--523.

\bibitem{barchi1}
\textsc{Barchiesi, D.  {\small \&} Plumbley, M.~D.}  (2013) Learning Incoherent
  Dictionaries for Sparse Approximation Using Iterative Projections and
  Rotations. \emph{IEEE Transactions on Signal Processing}, \textbf{61}(8),
  2055--2065.

\bibitem{chatterji2017alternating}
\textsc{Chatterji, N.  {\small \&} Bartlett, P.~L.}  (2017) Alternating
  minimization for dictionary learning with random initialization. in
  \emph{Advances in Neural Information Processing Systems 30}, pp. 1997--2006.

\bibitem{chen2}
\textsc{Chen, S.~S., Donoho, D.~L.  {\small \&} Saunders, M.~A.}  (1998) Atomic
  Decomposition by Basis Pursuit. \emph{SIAM J. Sci. Comput.}, \textbf{20}(1),
  33--61.

\bibitem{wei}
\textsc{Dai, W.  {\small \&} Milenkovic, O.}  (2009) Subspace Pursuit for
  Compressive Sensing Signal Reconstruction. \emph{IEEE Trans. Information
  Theory}, \textbf{55}(5), 2230--2249.

\bibitem{befro}
\textsc{Efron, B., Hastie, T., Johnstone, I.  {\small \&} Tibshirani, R.}
  (2004) Least angle regression. \emph{Annals of Statistics}, \textbf{32},
  407--499.

\bibitem{elad2}
\textsc{Elad, M.  {\small \&} Aharon, M.}  (2006) Image denoising via sparse
  and redundant representations over learned dictionaries. \emph{IEEE Trans.
  Image Process.}, \textbf{15}(12), 3736--3745.

\bibitem{eng}
\textsc{Engan, K., Aase, S.  {\small \&} Hakon-Husoy, J.}  (1999) Method of
  optimal directions for frame design. in \emph{Proc. {IEEE} International
  Conference on Acoustics, Speech, and Signal Processing}, pp. 2443--2446.

\bibitem{hanif2014maximum}
\textsc{Hanif, M.  {\small \&} Seghouane, A.-K.}  (2014) Maximum likelihood
  orthogonal dictionary learning. in \emph{2014 IEEE Workshop on Statistical
  Signal Processing (SSP)}, pp. 141--144. IEEE.

\bibitem{lustig}
\textsc{Lustig, M., Donoho, D.  {\small \&} Pauly, J.}  (2007) Sparse {MRI}:
  The Application of Compressed Sensing for Rapid {MR} Imaging. \emph{Magnetic
  Resonance in Medicine}, \textbf{58}(6), 1182--1195.

\bibitem{Mai}
\textsc{Mairal, J., Bach, F., Ponce, J.  {\small \&} Sapiro, G.}  (2010) Online
  Learning for Matrix Factorization and Sparse Coding. \emph{J. Mach. Learn.
  Res.}, \textbf{11}, 19--60.

\bibitem{jpg2}
\textsc{Marcellin, M.~W., Gormish, M.~J., Bilgin, A.  {\small \&} Boliek,
  M.~P.}  (2000) An overview of {JPEG-2000}. in \emph{Proc. Data Compression
  Conf.}, pp. 523--541.

\bibitem{namel}
\textsc{Nam, S., Davies, M.~E., Elad, M.  {\small \&} Gribonval, R.}  (2011)
  Cosparse analysis modeling - uniqueness and algorithms. in \emph{ICASSP}, pp.
  5804--5807.

\bibitem{npb}
\textsc{Natarajan, B.~K.}  (1995) Sparse Approximate Solutions to Linear
  Systems. \emph{SIAM J. Comput.}, \textbf{24}(2), 227--234.

\bibitem{Needell2}
\textsc{Needell, D.  {\small \&} Tropp, J.}  (2009) CoSaMP: Iterative signal
  recovery from incomplete and inaccurate samples. \emph{Applied and
  Computational Harmonic Analysis}, \textbf{26}(3), 301--321.

\bibitem{ols}
\textsc{Olshausen, B.~A.  {\small \&} Field, D.~J.}  (1996) Emergence of
  simple-cell receptive field properties by learning a sparse code for natural
  images. \emph{Nature}, \textbf{381}(6583), 607--609.

\bibitem{pati}
\textsc{Pati, Y., Rezaiifar, R.  {\small \&} Krishnaprasad, P.}  (1993)
  Orthogonal Matching Pursuit : recursive function approximation with
  applications to wavelet decomposition. in \emph{Asilomar Conf. on Signals,
  Systems and Comput.}, pp. 40--44 vol.1.

\bibitem{pfisbres}
\textsc{{Pfister}, L.  {\small \&} {Bresler}, Y.}  (2019) Learning Filter Bank
  Sparsifying Transforms. \emph{IEEE Transactions on Signal Processing},
  \textbf{67}(2), 504--519.

\bibitem{irami}
\textsc{Ramirez, I., Sprechmann, P.  {\small \&} Sapiro, G.}  (2010)
  Classification and clustering via dictionary learning with structured
  incoherence and shared features. in \emph{Proc. {IEEE} International
  Conference on Computer Vision and Pattern Recognition (CVPR) 2010}, pp.
  3501--3508.

\bibitem{sabres3}
\textsc{Ravishankar, S.  {\small \&} Bresler, Y.}  (2013a) Closed-form
  solutions within sparsifying transform learning. in \emph{IEEE International
  Conference on Acoustics, Speech and Signal Processing (ICASSP)}, pp.
  5378--5382.

\bibitem{sabres}
\textsc{\BySame{}}  (2013b) Learning Sparsifying Transforms. \emph{IEEE Trans.
  Signal Process.}, \textbf{61}(5), 1072--1086.

\bibitem{sbclsTS2}
\textsc{\BySame{}}  (2015) $\ell_0$ Sparsifying Transform Learning with
  Efficient Optimal Updates and Convergence Guarantees. \emph{IEEE Trans.
  Signal Process.}, \textbf{63}(9), 2389--2404.

\bibitem{sravTCI1}
\textsc{\BySame{}}  (2016) Data-Driven Learning of a Union of Sparsifying
  Transforms Model for Blind Compressed Sensing. \emph{IEEE Transactions on
  Computational Imaging}, \textbf{2}(3), 294--309.

\bibitem{sravwohl}
\textsc{Ravishankar, S.  {\small \&} Wohlberg, B.}  (2018) Learning Multi-Layer
  Transform Models. in \emph{2018 56th Annual Allerton Conference on
  Communication, Control, and Computing (Allerton)}, pp. 160--165.

\bibitem{vari1}
\textsc{Rockafellar, R.~T.  {\small \&} Wets, R. J.-B.}  (1998)
  \emph{Variational Analysis}. Springer-Verlag, Heidelberg, Germany.

\bibitem{rumi}
\textsc{Rubinstein, R., Faktor, T.  {\small \&} Elad, M.}  (2012) {K-SVD}
  Dictionary-Learning for the Analysis Sparse Model. in \emph{Proc. IEEE
  International Conference on Acoustics, Speech and Signal Processing
  (ICASSP)}, pp. 5405--5408.

\bibitem{aksvd}
\textsc{Rubinstein, R., Peleg, T.  {\small \&} Elad, M.}  (2013) Analysis
  K-SVD: A Dictionary-Learning Algorithm for the Analysis Sparse Model.
  \emph{IEEE Transactions on Signal Processing}, \textbf{61}(3), 661--677.

\bibitem{zibul}
\textsc{Rubinstein, R., Zibulevsky, M.  {\small \&} Elad, M.}  (2010) Double
  Sparsity: Learning Sparse Dictionaries for Sparse Signal Approximation.
  \emph{IEEE Transactions on Signal Processing}, \textbf{58}(3), 1553--1564.

\bibitem{schnass2018convergence}
\textsc{Schnass, K.}  (2018) Convergence radius and sample complexity of ITKM
  algorithms for dictionary learning. \emph{App. Comp. Harm. Ana.},
  \textbf{45}(1), 22--58.

\bibitem{smith1}
\textsc{Smith, L.~N.  {\small \&} Elad, M.}  (2013) Improving Dictionary
  Learning: Multiple Dictionary Updates and Coefficient Reuse. \emph{IEEE
  Signal Processing Letters}, \textbf{20}(1), 79--82.

\bibitem{spel2b}
\textsc{Spielman, D.~A., Wang, H.  {\small \&} Wright, J.}  (2012) Exact
  Recovery of Sparsely-Used Dictionaries. in \emph{Proceedings of the 25th
  Annual Conference on Learning Theory}, pp. 37.1--37.18.

\bibitem{studer2012dictionary}
\textsc{Studer, C.  {\small \&} Baraniuk, R.~G.}  (2012) Dictionary learning
  from sparsely corrupted or compressed signals. in \emph{ICASSP}, pp.
  3341--3344.

\bibitem{sun1}
\textsc{Sun, J., Qu, Q.  {\small \&} Wright, J.}  (2017a) Complete Dictionary
  Recovery Over the Sphere I: Overview and the Geometric Picture. \emph{IEEE
  Transactions on Information Theory}, \textbf{63}(2), 853--884.

\bibitem{sun2}
\textsc{\BySame{}}  (2017b) Complete Dictionary Recovery Over the Sphere II:
  Recovery by Riemannian Trust-Region Method. \emph{IEEE Transactions on
  Information Theory}, \textbf{63}(2), 885--914.

\bibitem{saiwen}
\textsc{Wen, B., Ravishankar, S.  {\small \&} Bresler, Y.}  (2015) Structured
  Overcomplete Sparsifying Transform Learning with Convergence Guarantees and
  Applications. \emph{International Journal of Computer Vision},
  \textbf{114}(2-3), 137--167.

\bibitem{xu:12:ldx}
\textsc{Xu, Q., Yu, H., Mou, X., Zhang, L., Hsieh, J.  {\small \&} Wang, G.}
  (2012) Low-dose {X-ray} {CT} reconstruction via dictionary learning.
  \emph{IEEE Trans. Med. Imag.}, \textbf{31}(9), {1682--1697}.

\bibitem{yint3}
\textsc{Xu, Y.  {\small \&} Yin, W.}  (2016) A fast patch-dictionary method for
  whole image recovery. \emph{Inverse Problems and Imaging}, \textbf{10}(2),
  563--583.

\bibitem{Yagh}
\textsc{Yaghoobi, M., Blumensath, T.  {\small \&} Davies, M.}  (2009)
  Dictionary Learning for Sparse Approximations with the Majorization Method.
  \emph{IEEE Transaction on Signal Processing}, \textbf{57}(6), 2178--2191.

\bibitem{nam}
\textsc{Yaghoobi, M., Nam, S., Gribonval, R.  {\small \&} Davies, M.}  (2011)
  Analysis Operator Learning for Overcomplete Cosparse Representations. in
  \emph{European Signal Processing Conference (EUSIPCO)}, pp. 1470--1474.

\bibitem{yana}
\textsc{Yaghoobi, M., Nam, S., Gribonval, R.  {\small \&} Davies, M.~E.}
  (2012) Noise Aware Analysis Operator Learning For Approximately Cosparse
  Signals. in \emph{Proc. IEEE International Conference on Acoustics, Speech
  and Signal Processing (ICASSP)}, pp. 5409--5412.

\end{thebibliography}

\end{document}